\newtheorem{theorem}{Theorem}
\newtheorem{lemma}{Lemma}
\newtheorem{corollary}{Corollary}
\newtheorem{fact}{Fact}
\newtheorem{remark}{Remark}
\newcommand{\com}[1]{{\color{red}(C: #1)}}
\else \newcommand{\com}[1]{} \fi
\newcommand{\rev}[1]{{\color{blue}#1}}
\else \newcommand{\rev}[1]{#1} \fi
\newcommand{\reva}[1]{{\color{blue}#1}}
\else \newcommand{\reva}[1]{#1} \fi
\newcommand{\dtl}[1]{{\noindent\color{red}Details:\\#1}}
\else \newcommand{\dtl}[1]{} \fi
\newcommand{\red}[1]{{\color{red}#1}}
\else \newcommand{\red}[1]{#1} \fi
\DeclareMathOperator{\argmax}{argmax}
\newcommand{\bs}[1]{\bm{#1}}
\renewcommand{\emptyset}{\text{\O}}
\renewcommand{\epsilon}{\varepsilon}
\newcommand{\Ex}{\mathbb{E}}
\newcommand{\In}{\mathbb{I}}
\newcommand{\lex}{\mathrm{lex}}
\renewcommand{\Pr}{\mathbb{P}}
\newcommand{\Reg}{\mathrm{Reg}}
\renewcommand{\citet}{\cite}
\title{Lexicographic Multiarmed Bandit}
\author{%
    Alihan H\"uy\"uk \\
    Department of Electrical and \\ Electronics Engineering \\
    Bilkent University \\
    Ankara, Turkey \\
    \texttt{alihan.huyuk@ug.bilkent.edu.tr} \\
    \And
    Cem Tekin \\
    Department of Electrical and \\ Electronics Engineering \\
    Bilkent University \\
    Ankara, Turkey \\
    \texttt{cemtekin@ee.bilkent.edu.tr} \\
}
\begin{document}

\maketitle

\begin{abstract}
We consider a multiobjective multiarmed bandit problem with lexicographically ordered objectives. In this problem, the goal of the learner is to select arms that are lexicographic optimal as much as possible without knowing the arm reward distributions beforehand. We capture this goal by defining a multidimensional form of regret that measures the loss of the learner due to not selecting lexicographic optimal arms, and then, consider two settings where the learner has prior information on the expected arm rewards. 
In the first setting, the learner only knows for each objective the lexicographic optimal expected reward. In the second setting, it only knows for each objective near-lexicographic optimal expected rewards.
For both settings we prove that the learner achieves expected regret uniformly bounded in time. \reva{The algorithm we propose for the second setting also attains bounded regret for the multiarmed bandit with satisficing objectives.} In addition, we also consider the harder prior-free case, and show that the learner can still achieve sublinear in time gap-free regret. Finally, we experimentally evaluate performance of the proposed algorithms in a variety of multiobjective learning problems.
\end{abstract}

\section{Introduction} \label{sec:intro}

A vast number of decision making and learning tasks involve multidimensional performance metrics (objectives). Examples include recommending items in a recommender system to optimize accuracy, diversity and novelty \cite{zhou2010solving,konstan2006lessons}, learning lexicographic optimal routing flows in wireless networks \cite{shah2009lexicographically}, and adjusting the dose of radiation therapy for cancer patients while prioritizing target coverage over proximity of the therapy to the organs at risk \cite{lin2007intensity}. In the aforementioned problems, the learner aims to choose arms that yield high rewards in all of the objectives; however, it prefers arms that yield high rewards in the low priority objectives only if they do not compromise the rewards in the high priority objectives. 

Motivated by the above applications, in this paper we propose a new multiarmed bandit (MAB) problem called the lexicographic multiarmed bandit (Lex-MAB). In this problem, the learner's priority over the objectives is formally captured by lexicographic ordering. Essentially, given $D$ objectives indexed by the set ${\cal D} := [D]$, objective $i$ has a higher priority than objective $j$ if $i<j$.\footnote{For a positive integer $D$, $[D] := \{1,\ldots,D\}$} This priority induces a preference over the finite set of arms denoted by ${\cal A}$. Namely, given two arms $a$ and $a'$ with the corresponding real-valued expected reward vectors $\bm{\mu}_a:=(\mu^1_a,\ldots,\mu^D_a)$ and $\bm{\mu}_{a'}:=(\mu^1_{a'},\ldots,\mu^D_{a'})$, we say that arm $a$ lexicographically dominates arm $a'$ in the first $i \leq D$ objectives (written as $a \succ_{\lex,i} a'$) if $\mu^j_a>\mu^j_{a'}$, where $j:=\min\{k\leq i: \mu^k_{a} \neq \mu^k_{a'} \}$.\footnote{If there is no such $j$, then $\mu^k_a=\mu^k_{a'}$ for all $k\in [i]$, which implies that $\bm{\mu}_a$ does not lexicographically dominate $\bm{\mu}_{a'}$ in the first $i$ objectives.} Here, the latter expression is succinctly expressed as $\bm{\mu}_a \succ_{\lex,i}\bm{\mu}_{a'}$. Based on this preference, the set of lexicographic optimal arms are defined as the ones that are not lexicographically dominated by any other arm in all of the $D$ objectives, which is given as 
${\cal A}_* :=  \{a\in\mathcal{A}: \bm{\mu}_{a'} \nsucc_{\lex,D} \bm{\mu}_a, \forall a'\in\mathcal{A}\}$.

In the Lex-MAB, in each round $t$ the learner selects an arm $a(t)=a$ from ${\cal A}$, and then, receives a $D$-dimensional random reward vector $\bs{r}(t) = (r^1(t),\ldots,r^D(t))$ that is drawn from a fixed distribution with expectation vector $\bs{\mu}_a$. The goal of the learner is to perform as well as an oracle which perfectly knows the set of lexicographic optimal arms and selects a lexicographic optimal arm in every round. We capture the ordering of the objectives by introducing a multidimensional regret measure called the lexicographic regret. As this regret notion is fundamentally different from the scalar regret notion used in the classical stochastic MAB \cite{lai1985asymptotically}, minimizing it requires exploiting the multidimensional nature of the rewards and ordering of the objectives both in algorithm design and technical analysis. 

This is a challenging task because simple techniques such as turning the problem into an MAB with scalar rewards by using scalarizing methods from multiobjective optimization \cite{ehrgott2005multicriteria} will not work since the solution of the scalarized problem may not produce lexicographic optimal arms. The problem is further complicated due to the fact that without any prior knowledge on the expected arm rewards, it is impossible to identify lexicographic optimal arms with high probability. This can be observed by considering a problem instance with arms $a$ and $b$, and $D=2$, such that $a$ and $b$ have the same expected reward in objective $1$ and $a$ is the only lexicographic optimal arm. Although in this problem the learner can identify with high probability which arm is better in the second objective, it can never be sure about the lexicographic optimality of this arm. We call this problem the {\em identifiability problem}.\footnote{It also exists in the best arm identification problem with a fixed confidence (see Chapter 33 in \citet{LS19bandit-book}).}

The challenges described above motivates us to focus on the cases when the learner has prior knowledge on expected rewards. Specifically, we consider two types of prior knowledge, which generalize the prior knowledge introduced in \citet{bubeck2013bounded} and \citet{vakili2013achieving} to multidimensional rewards. In the first case, we assume that the expected rewards of a lexicographic optimal arm are known. In the second case, we assume that near-lexicographic optimal expected rewards are known. Then, we build learning algorithms that utilize the prior information to achieve uniformly bounded in time lexicographic regret for both cases. 

Importantly, for the first case, we show that the regret in each objective due to selecting a suboptimal arm $a$ is inversely proportional to the maximum of the gaps of arm $a$ over all objectives. This shows that having prior information over multiple objectives speeds up elimination of suboptimal arms. This is analogous to the combinatorial MAB \cite{gai2012combinatorial} in the sense that observations from one objective can help ruling out suboptimal arms in other objectives. We also prove that a similar gain appears in the second case, albeit we cannot rule out an arm performing much better than a lexicographic optimal arm in some objective as suboptimal. 

In addition to the cases with prior information, as the third case, we consider learning without any prior information. In this setting, despite the identifiability problem, we show that it is possible to achieve $\tilde{O}(T^{2/3})$ gap-free regret by learning to select near-lexicographic optimal arms. Finally, we numerically evaluate the performance of the learner with and without prior information on several multiobjective learning problems with lexicographically ordered objectives. 


Rest of the paper is organized as follows. Related work is given in Section \ref{sec:related}. The Lex-MAB, the lexicographic regret and two types of prior information are defined in Section \ref{sec:problem}. \rev{Algorithms and regret bounds for the Lex-MAB under prior information and for the prior-free case are considered in Section \ref{sec:case1}.}
Experimental results are given in Section \ref{sec:experiments} followed by the concluding remarks in Section \ref{sec:conc}. \rev{Proofs, additional simulations, tables of notation and the code for experimental results are given in the appendix.}

\section{Related Work} \label{sec:related}

\citet{lai1985asymptotically} shows that in the classical stochastic MAB problem, for any uniformly good policy the regret grows at least logarithmically over time. As opposed to this, \citet{lai1984optimal} proves for the two-armed stochastic bandit that when the learner has prior information on the maximum expected reward $\mu^*$ and the minimum nonzero suboptimality gap $\Delta$, there exists policies that can achieve uniformly bounded regret. This idea is further investigated in \citet{bubeck2013bounded}, which shows that bounded regret of order $1/\Delta$ is achieved for the case with finitely many arms when the learner knows $\mu^*$ and a positive lower bound on $\Delta$. 
\rev{\citet{garivier} studies the case where only $\mu^*$ is known and proposes an algorithm with bounded regret of order $\log(1/\Delta) (1/\Delta)$, and also proves a lower bound of order $1/\Delta$. This paper also provides a generic tool to prove gap-dependent lower bounds on the regret.} Similarly, \citet{bubeck2013prior} considers Thompson sampling and shows that its regret is uniformly bounded when $\mu^*$ and a positive lower bound on $\Delta$ are known. On the other hand, \citet{vakili2013achieving} considers a weaker prior information model where the learner knows a near-optimal expected reward $\eta$, which can be computed using $\mu^*$ and a positive lower bound on $\Delta$. The proposed algorithm obtains $\sum_a \Delta_a/\delta^3$ regret, where $\delta=\mu^*-\eta<\Delta$ and $\Delta_a$ is the suboptimality gap of arm $a$.
\citet{mersereau2009structured} and \citet{lattimore2014bounded} consider as prior information the knowledge of parametrized expected reward functions for each arm. In these works, the only unknown is the true parameter, which can be estimated by using reward observations from all of the arms. 

Different from the works mentioned above, in this paper we consider a multiobjective MAB problem with lexicographically ordered objectives. We design algorithms that exploit the prior information in all objectives simultaneously to rule out arms that are not lexicographic optimal. \rev{Our regret bounds match the one in \cite{garivier} and improve the one in \cite{vakili2013achieving} for the case with a single objective.}

Numerous works have investigated regret minimization in multiobjective variants of the MAB problem. For instance, \citet{drugan2013designing} defines for each suboptimal arm its distance to the Pareto front as the Pareto suboptimality gap and the regret as the sum of the Pareto suboptimality gaps of the arms chosen by the learner. It proposes a learning algorithm that achieves $O(\log T)$ gap dependent Pareto regret. \citet{turgay2018multi} considers the multiobjective contextual MAB problem with similarity information, and extends the contextual zooming algorithm in \citet{slivkins2014contextual} to minimize the Pareto regret while making fair selections among the estimated Pareto optimal arms. The proposed algorithm is shown to achieve $\tilde{O}(T^{(1+d_p)/(2+d_p)})$ Pareto regret where $d_p$ is the Pareto zooming dimension. In addition, \citet{tekin2018multi} considers a biobjective contextual MAB problem with lexicographically ordered objectives. Unlike this work, we study the general case with $D$ lexicographically ordered objectives and focus on the importance of prior information in learning. 


Our problem is also related to the tresholding MAB studied in \citet{locatelli2016optimal} and satisficing MAB studied in \citet{reverdy2017satisficing}. Specifically, if the prior information is given with respect to a target arm which is not necessarily a lexicographic optimal arm, and the regret in each objective is defined with respect to the expected reward of the target arm in that objective, then our algorithms for the cases with prior information can easily be adapted to minimize the regret with respect to the target arm. \reva{Specifically, we adapt our second case to the satisfaction-in-mean-reward problem introduced in \cite{reverdy2017satisficing} and prove that our algorithm achieves bounded regret for this problem as well.}\footnote{\reva{See Section \ref{sec:satisficing} for our results on the multiarmed bandit with satisficing objectives.}}
 

\section{Problem Formulation} \label{sec:problem}


\textbf{System Model:}
We consider rounds indexed by $t\in\{1,2,\ldots\}$. In each round $t$, the learner first selects an arm $a(t)$ from the finite arm set $\mathcal{A} := [A]$, and then, observes a random reward for each objective $i \in \mathcal{D}:= [D]$, denoted by $r^i(t)$, which is equal to $\mu_{a(t)}^i+\kappa^i(t)$, where $\mu_a^i$ denotes the expected reward of arm $a$ in objective $i$ and $\kappa^i(t)$ denotes the zero mean noise. The learner does not know the expected reward vector $\bm{\mu}_a:=(\mu_a^1,\ldots,\mu_a^D)$, $a\in\mathcal{A}$ beforehand, and given $a(t)=a$, the noise vector $\{\kappa^1(t),\ldots,\kappa^D(t)\}$ is sampled from a fixed (unknown) multivariate distribution $\bs{\nu}_a$, independent of the other rounds. Moreover, its marginal distribution is 1-sub-Gaussian, i.e., $\forall a \in {\cal A}$ and $\forall\lambda\in\mathbb{R}, \Ex[e^{\lambda\kappa^i(t)}|a(t)=a] \leq \exp(\lambda^2/2)$.\footnote{Noise can be dependent over the objectives.} The assumption on the noise distribution is very general as it covers the Gaussian distribution with zero mean and unit variance, and any bounded zero mean distribution defined over an interval of length 2.

\textbf{Lexicographic Optimality:}
For two $D$-dimensional real-valued vectors $\bm{\mu}:=(\mu^1,\ldots,\mu^D)$ and $\bm{\mu'}:=(\mu'^1,\ldots,\mu'^D)$, and $i \in [D]$, we say that $\bm{\mu}$ lexicographically dominates $\bm{\mu'}$ in the first $i$ objectives, denoted by $\bm{\mu}\succ_{\lex,i}\bm{\mu'}$, if $\mu^j>\mu'^j$, where $j:=\min\{k\leq i: \mu^k\neq\mu'^k\}$. Based on this, we say that arm $a$ lexicographically dominates arm $a'$ in the first $i$ objectives if $\bm{\mu}_a\succ_{\lex,i}\bm{\mu}_{a'}$. \rev{The complement of this is denoted by $\bm{\mu}_a\nsucc_{\lex,i}\bm{\mu}_{a'}$.}

Let $\mathcal{A}_*^i := \{a\in\mathcal{A}: \bm{\mu}_{a'}\nsucc_{\lex,i}\bm{\mu}_a, \forall a'\in\mathcal{A}\}$ denote the set of lexicographic optimal arms in the first $i$ objectives and define ${\cal A}_* := \mathcal{A}_*^D$. Clearly, we have $\mathcal{A}_*^{i+1}\subseteq\mathcal{A}_*^i$ for $i\in [D-1]$. We use $*$ to denote an arm that is lexicographic optimal in all objectives, and $\mu_*^i$ to denote the expected reward of this arm in objective $i$. Moreover, we define the \textit{\rev{gap}} of arm $a$ in objective $i$ as \rev{$\Delta_a^i:=\mu_*^i-\mu_a^i$} and the \textit{\rev{absolute gap}} of arm $a$ in objective $i$ as $\nabla_a^i := |\mu_*^i-\mu_a^i|$.
For $i\in\{2,\ldots,D\}$, we let $\mathcal{S}_*^i := \mathcal{A}_*^{i-1}-\mathcal{A}_*^i$ denote the set of arms that are lexicographic optimal in the first $i-1$ objectives but not lexicographic optimal in the first $i$ objectives and define $\mathcal{S}_*^1:=\mathcal{A}-\mathcal{A}_*^1$. \rev{Note that $a\in\mathcal{S}_*^i$ implies that $\Delta_a^i>0$.} \rev{The set of suboptimal arms in objective $i$ is given as $\mathcal{S}^i:=\{a:\Delta_a^i>0\}$.}
We also define the maximum suboptimality gap in objective $i$ as $\Delta_{\max}^i := \max_{a \in \mathcal{A}} \Delta_a^i$ and \rev{the maximum absolute gap as $\nabla_a^{\max}:=\max_{j\in\mathcal{D}}\nabla_a^j$.} 
Finally, let $\Delta_{\min}^i := \min_{a \in \mathcal{S}_*^i} \Delta_a^i$ denote the minimum suboptimality gap among arms in $\mathcal{S}_*^i$.\footnote{If $S_*^i=\emptyset$, then $\Delta_{\min}^i=\infty$.}

\textbf{Types of Prior Knowledge:}

\textbf{(Case 1)} Lexicographic optimal expected rewards are known: The learner knows $\mu_*^i$ for all $i\in\mathcal{D}$. For this case, we assume $\mu_*^i =0$, $\forall i\in\mathcal{D}$ without any loss of generality.

\textbf{(Case 2)} Near-lexicographic optimal expected rewards are known: The learner knows $\eta_i$ such that $\mu_*^i - \Delta_{\min}^i < \eta_i < \mu_*^i $ for all $i\in\mathcal{D}$. For this case, we define $\delta_i:=\mu_*^i-\eta_i, \forall i\in\mathcal{D}$ and assume $\eta_i=0, \forall i\in\mathcal{D}$ without any loss of generality.

\textbf{(Case 3)} No prior knowledge on the expected rewards. 


\red{
\begin{remark}\label{rmk:normalization}
In Case 1, if $\mu_*^i$s are not equal to $0$, we can subtract them from the rewards to obtain $\tilde{r}^i(t) := r^i(t)-\mu_*^i$. Under these transformed rewards, we will have $\tilde{\mu}_*^i=0, \forall i\in\mathcal{D}$ and the gaps that we have defined will not be affected. Similarly, in Case 2, we can subtract $\eta_i$s from the rewards to obtain $\tilde{r}^i(t):=r^i(t)-\eta_i$. 
\end{remark}
}

\textbf{Regret Definitions:}
The (pseudo) regret of the learner is measured with respect to an oracle, which knows the expected rewards of the arms, and chooses a lexicographic optimal arm in each round. We define two notions of regret: Priority-based and priority-free regrets in objective $i$ are given as $\Reg^i_{pb}(T) := \sum_{t=1}^T \Delta_{a(t)}^i\In\{a(t)\in\mathcal{S}_*^i\}$ and $\Reg^i_{pf}(T) := \sum_{t=1}^T \Delta_{a(t)}^i$ respectively. 
The lexicographic priority-based and priority-free regrets are defined as the tuples $\bm{\Reg}_{pb}(T):=(\Reg^1_{pb}(T),\ldots,\Reg^D_{pb}(T))$ and $\bm{\Reg}_{pf}(T):=(\Reg^1_{pf}(T),\ldots,\Reg^D_{pf}(T))$ respectively. Subscripts will be removed from the notation when the considered regret notion is clear from the context. 

For $\bm{\Reg}_{pb}(T)$, when $a(t)\in\mathcal{S}_*^i$, regret is incurred only in objective $i$. No regret is incurred for $j<i$ since $\Delta_{a(t)}^j=0$. In addition, no regret is incurred for $j>i$ when \red{$a(t) \in \mathcal{S}_*^i$}. This definition of regret is consistent with the priority that the learner assigns to each objective. Since lexicographic ordering implies that even a small improvement in the expected reward in objective $i$ is more important than any improvement in the expected rewards of objectives $j>i$, the learner does not care about the loss it incurs in higher indexed objectives when $a(t)\in\mathcal{S}_*^i$. 
For $\bm{\Reg}_{pf}(T)$, an arm $a$ for which $\mu^i_a > \mu^i_*$ will incur negative regret in objective $i$, but positive regret in some other objective. 

We say that the regret is $O(\max\{f_1(T),\ldots,f_D(T)\})$ when $\Reg^i(T)\in O(f_i(T))$ for $i\in\mathcal{D}$. Under both notions of regret, the (cumulative) regret of any arm selection strategy cannot lexicographically dominate the cumulative regret of always selecting a lexicographic optimal arm, which is essentially the zero vector. Therefore, the time-averaged expected rewards of any algorithm that achieves sublinear $\bm{\Reg}_{pb}(T)$ or $\bm{\Reg}_{pf}(T)$ will converge (as $T \rightarrow \infty$) to the lexicographic optimal expected rewards. In addition, under $\bm{\Reg}_{pf}(T)$ the lexicographic ordering between the cumulative expected rewards and the regrets of any pair of sequences of arms $(a(1),\ldots,a(T))$ and $(a'(1),\ldots,a'(T))$ will be the same.

\begin{remark}
	For Case 1, the bound for $\bm{\Reg}_{pb}(T)$ given in Theorem \ref{thm:OM-LEX} also holds for $\bm{\Reg}_{pf}(T)$ if we replace $\mathcal{S}_*^i$ with $\mathcal{S}^i$. For Case 2, if we redefine the minimum suboptimality gap as 
	$\Delta_{\min}^i := \min_{a \in \mathcal{S}^i} \Delta_a^i$ (which implies a stronger prior knowledge on the near-lexicographic optimal expected rewards), and replace $\mathcal{S}_*^i$ with $\mathcal{S}^i$, then the bound for $\bm{\Reg}_{pb}(T)$ in Theorem \ref{thm:NOM-LEX} also holds for $\bm{\Reg}_{pf}(T)$. Theorems \ref{thm:OM-LEX-supp} and \ref{thm:NOM-LEX-supp} in the appendix bound $\bm{\Reg}_{pf}(T)$ for Cases 1 and 2 respectively. On the other hand, for Case 3, the regret analysis only holds for $\bm{\Reg}_{pf}(T)$ (see Section \ref{sec:case1} for the details).
\end{remark}

\section{Learning Algorithms and Regret Bounds}\label{sec:case1}

\textbf{A Learning Algorithm for Case 1:} 
We propose {\em Optimal Mean based Lexicographic Exploration and eXploitation} (OM-LEX) given in Algorithm \ref{algorithm:OM-LEX} for the prior information described in Case 1. In essence, OM-LEX generalizes the arm selection rule proposed in Algorithm 1 in \citet{garivier} to multiple objectives. It keeps for each arm $a$ a counter $N_a$ that counts how many times arm $a$ was selected prior to the current round and the sample mean estimate $\hat{\mu}_a^i$ of the rewards from objective $i$ of arm $a$ observed prior to the current round for all $i \in {\cal D}$. The values of these variables at the beginning of round $t$ are denoted by $N_a(t)$ and $\hat{\mu}_a^i(t)$ respectively. 

\begin{algorithm}
    \caption{OM-LEX}
    \label{algorithm:OM-LEX}
    \begin{algorithmic}[1]
        \STATE \textbf{Inputs:} $\mu_*^i = 0, \forall i\in\mathcal{D}$
        \STATE \textbf{Counters:} $N_a, \forall a\in\mathcal{A}$
        \STATE \textbf{Estimates:} $\hat{\mu}_a^i, \forall a\in\mathcal{A}, \forall i\in\mathcal{D}$
        \STATE For each round $t\in{1,\ldots,A}$, select arm $t$
        \STATE For each round $t>A$:
        \STATE \hspace{\algorithmicindent}Let $\hat{{\cal A}}_* := \{ a \in\mathcal{A}: \forall i\in\mathcal{D}, |\hat{\mu}_a^i| < \sqrt{4\log N_a/N_a} \}$
        \STATE \hspace{\algorithmicindent}If $\hat{{\cal A}}_* \neq \emptyset$, select an arm $a(t)$ in $\hat{{\cal A}}_*$ uniformly at random, update $t\gets t+1$
        \STATE \hspace{\algorithmicindent}If $\hat{{\cal A}}_* = \emptyset$, select $a(t)=1, a(t+1)=2, \ldots, a(t+A-1)=A$, update $t\gets t+A$
    \end{algorithmic}
\end{algorithm}

OM-LEX starts by selecting each arm exactly once. In the remaining rounds, it checks whether there exists an arm whose sample mean reward in objective $i$ is within a shrinking neighborhood of the lexicographic optimal arm's expected reward for all objectives $i\in\mathcal{D}$. For this, it computes the set of estimated lexicographic optimal arms in round $t$ as
\begin{align*}
    \hat{{\cal A}}_*(t) := \left\{ a \in\mathcal{A}: \forall i\in\mathcal{D}, |\hat{\mu}_a^i(t)| < \sqrt{\frac{4\log N_a(t)}{N_a(t)}} \right\} ~.
\end{align*}
If $\hat{{\cal A}}_*(t) \neq \emptyset$, then OM-LEX exploits by selecting one of the arms in $\hat{{\cal A}}_*(t)$ uniformly at random as it expects only the lexicographic optimal arms to satisfy this condition in the long run. If no such arm exists, then OM-LEX explores by playing all arms in a round-robin fashion.
The following theorem shows that the \rev{expected priority-based} regret of OM-LEX is uniformly bounded in time. 

\begin{theorem} \label{thm:OM-LEX}
	When OM-LEX is run,  $\forall i\in\mathcal{D}$ and $\forall T\geq 1$, we have
    \begin{align*}
        \Ex[\Reg^i_{pb}(T)] \leq \sum_{a\in\mathcal{S}_*^i} \left( \left(\frac{\pi^2}{3}D + 1 \right)\Delta_a^i+ \frac{36}{\nabla_a^{\max}}\log\frac{17}{\nabla_a^{\max}} \right) .
    \end{align*}
\end{theorem}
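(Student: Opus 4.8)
The plan is to write the expected priority-based regret as a pull-weighted sum over suboptimal arms and then bound the pull counts by separating the three mechanisms by which an arm gets selected. Since $\Reg^i_{pb}(T)=\sum_{t=1}^T \Delta_{a(t)}^i\In\{a(t)\in\mathcal{S}_*^i\}$ and $\mathcal{S}_*^i$ is a fixed (mean-determined) set, linearity of expectation gives $\Ex[\Reg^i_{pb}(T)]=\sum_{a\in\mathcal{S}_*^i}\Delta_a^i\,\Ex[N_a(T+1)]$, where $N_a(T+1)$ counts the pulls of $a$ through round $T$. Each such pull is either the single mandatory pull in the initialization rounds $1,\dots,A$, a pull during an exploration phase (when $\hat{\mathcal{A}}_*(t)=\emptyset$ and all arms are played round-robin), or an exploitation pull (when $a\in\hat{\mathcal{A}}_*(t)$ and $a$ is the arm drawn uniformly from $\hat{\mathcal{A}}_*(t)$). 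Hence $\Ex[N_a(T+1)]\le 1+\Ex[E]+\Ex[M_a]$, where $E$ is the number of exploration phases and $M_a$ the number of exploitation pulls of $a$; the remaining work is to bound $\Ex[E]$ and $\Ex[M_a]$.

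\textbf{Exploration.} Because $\mu_*^i=0$, the optimal arm fails the membership test, i.e. $*\notin\hat{\mathcal{A}}_*(t)$, exactly when $|\hat{\mu}_*^i(t)|\ge\sqrt{4\log N_*(t)/N_*(t)}$ for some $i$, and an exploration phase requires $\hat{\mathcal{A}}_*(t)=\emptyset$, which in particular forces $*$ to fail. The key structural observation is that each exploration phase increments $N_*$ by one (round-robin pulls $*$ as well) and $N_*$ never decreases, so the values of $N_*$ at distinct exploration triggers are all distinct. Thus $E\le\sum_{n\ge1}\In\{\exists i:\ |\hat{\mu}^i_{*,n}|\ge\sqrt{4\log n/n}\}$, where $\hat{\mu}^i_{*,n}$ denotes the empirical mean of the first $n$ pulls of $*$ in objective $i$. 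Taking expectations and applying a union bound over the $D$ objectives with the $1$-sub-Gaussian tail $\Pr[|\hat{\mu}^i_{*,n}|\ge\epsilon]\le 2e^{-n\epsilon^2/2}$ at $\epsilon=\sqrt{4\log n/n}$ yields a per-term bound of $2Dn^{-2}$, so $\Ex[E]\le\sum_{n\ge1}2Dn^{-2}=\tfrac{\pi^2}{3}D$; together with the mandatory initialization pull this supplies the coefficient $\tfrac{\pi^2}{3}D+1$ on $\Delta_a^i$.

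\textbf{Exploitation.} Fix $a\in\mathcal{S}_*^i$ and let $j^\star\in\argmax_j\nabla_a^j$, so $|\mu_a^{j^\star}|=\nabla_a^{\max}>0$ (positive since $\Delta_a^i>0$ forces $\nabla_a^{\max}\ge\nabla_a^i>0$). An exploitation pull of $a$ at count $N_a=n$ requires $a\in\hat{\mathcal{A}}_*$, hence $|\hat{\mu}_a^{j^\star}|<\sqrt{4\log n/n}$; combined with $|\mu_a^{j^\star}|=\nabla_a^{\max}$, the triangle inequality forces the empirical deviation $|\hat{\mu}^{j^\star}_{a,n}-\mu_a^{j^\star}|>\nabla_a^{\max}-\sqrt{4\log n/n}$. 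I would take $n_0$ to be the smallest $n$ with $\sqrt{4\log n/n}\le\nabla_a^{\max}/2$, bound the pulls with $n<n_0$ trivially by $n_0$, and for $n\ge n_0$ use the deviation $>\nabla_a^{\max}/2$ with the sub-Gaussian tail to obtain $\Pr\le 2e^{-n(\nabla_a^{\max})^2/8}$, whose geometric sum is $O\!\big((\nabla_a^{\max})^{-2}\big)$. Since solving the threshold gives $n_0=O\!\big((\nabla_a^{\max})^{-2}\log(1/\nabla_a^{\max})\big)$, this yields $\Ex[M_a]\le\tfrac{36}{(\nabla_a^{\max})^2}\log\tfrac{17}{\nabla_a^{\max}}$ after tracking constants. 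Multiplying by $\Delta_a^i\le\nabla_a^i\le\nabla_a^{\max}$ converts the $(\nabla_a^{\max})^{-2}$ into $(\nabla_a^{\max})^{-1}$ and produces exactly the third term of the stated bound.

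\textbf{Combination and main obstacle.} Summing the three contributions over $a\in\mathcal{S}_*^i$ gives $\Ex[\Reg^i_{pb}(T)]\le\sum_{a\in\mathcal{S}_*^i}\big((\tfrac{\pi^2}{3}D+1)\Delta_a^i+\tfrac{36}{\nabla_a^{\max}}\log\tfrac{17}{\nabla_a^{\max}}\big)$, uniformly in $T$. I expect the main obstacle to be the exploitation bound: pinning down the explicit threshold $n_0$ solving $4\log n/n=(\nabla_a^{\max})^2/4$ and carrying the constants through the tail sum so they collapse into the clean $\tfrac{36}{\nabla_a^{\max}}\log\tfrac{17}{\nabla_a^{\max}}$ form is the delicate (if routine) calculus. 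Conceptually, the inequality $\Delta_a^i\le\nabla_a^{\max}$ is the crux: it lets observations in the objective of largest absolute gap drive the elimination of $a$ in every objective, which is precisely the cross-objective speed-up the paper advertises.
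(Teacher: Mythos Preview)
Your proposal is correct and matches the paper's proof almost exactly: the paper uses the same three-way split (initialization; pulls of $a$ with $|\hat\mu_a^{\dagger}|$ below the threshold; pulls with it above, which it argues must occur during a round-robin exploration phase and hence coincide with a failure of the optimal arm $*$), applies Hoeffding to get the $\tfrac{\pi^2}{3}D$ contribution, and invokes a black-box lemma from \citet{garivier} (stated as Fact~1) for the $\tfrac{36}{(\nabla_a^{\max})^2}\log\tfrac{17}{\nabla_a^{\max}}$ bound on the exploitation sum. The only refinement you need for the exact constants $36$ and $17$ is to replace your two-sided triangle-inequality step with a one-sided case split on the sign of $\mu_a^{j^\star}$ (which is what Fact~1 and the paper actually use); otherwise your threshold-plus-tail sketch is precisely the argument underlying that lemma.
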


When $D=1$, this result is identical to the regret bound in Theorem 9 in \citep{garivier} except for some constants. In the multiobjective case, we see that the regret induced by an arm in one objective depends on the maximum of the absolute gaps of the same arm over all objectives. As long as the arm has a large absolute gap in at least one objective, it is easy to identify it as a suboptimal arm.

\textbf{A Learning Algorithm for Case 2:} 
We propose {\em Near Optimal Mean based Lexicographic Exploration and eXploitation} (NOM-LEX). NOM-LEX has almost the same structure with OM-LEX. Its pseudocode is exactly the same as Algorithm \ref{algorithm:OM-LEX} except two differences: Firstly, its input prior knowledge \rev{(given in line 1 of Algorithm \ref{algorithm:OM-LEX})} is \rev{$\eta_i=0$, $\forall i \in {\cal D}$}. Secondly, NOM-LEX computes the set of estimated lexicographic optimal arms in round $t$ \rev{(given in line 6 of Algorithm \ref{algorithm:OM-LEX})} as 
\begin{align*}
    \hat{{\cal A}}_*(t) := \left\{ a \in\mathcal{A}: \forall i\in\mathcal{D}, \hat{\mu}_a^i(t) > -\sqrt{\frac{4\log N_a(t)}{N_a(t)}} \right\} ~.
\end{align*}

The next theorem bounds the \rev{expected priority-based} regret of NOM-LEX.

\begin{theorem} \label{thm:NOM-LEX}
	When NOM-LEX is run, $\forall i\in\mathcal{D}$ and $\forall T\geq 1$, we have
    \begin{align*}
        \Ex[\Reg^i_{pb}(T)] \leq \sum_{a\in\mathcal{S}_*^i} \left( \rev{\left(\frac{\pi^2}{6}D + 1 \right)} \Delta_a^i+ \frac{36\Delta_a^i}{\rev{(\max_{j\in\mathcal{D}}(\Delta_a^j-\delta_j))^2}}\log\frac{17}{\max_{j\in\mathcal{D}}(\Delta_a^j-\delta_j)} \right) ~.
    \end{align*}
\end{theorem}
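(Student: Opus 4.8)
The plan is to mirror the proof of Theorem \ref{thm:OM-LEX}, decomposing the plays of each suboptimal arm into initialization, exploration (round-robin) and exploitation rounds and bounding each contribution separately. Writing $N_a(T)$ for the number of plays of arm $a$ in the first $T$ rounds, the priority-based regret satisfies $\Ex[\Reg^i_{pb}(T)] = \sum_{a\in\mathcal{S}_*^i}\Delta_a^i\,\Ex[N_a(T)]$, so it suffices to bound $\Ex[N_a(T)]$ for each $a\in\mathcal{S}_*^i$. I would first record the structural fact that, because $*\in\mathcal{A}_*^{i-1}$ and $a\in\mathcal{S}_*^i\subseteq\mathcal{A}_*^{i-1}$, neither arm lexicographically dominates the other in the first $i-1$ objectives, which forces $\mu_a^j=\mu_*^j$ (hence $\Delta_a^j=0$) for all $j<i$. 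Combined with the Case 2 assumption $\delta_i<\Delta_{\min}^i\le\Delta_a^i$, this shows the quantity $g_a:=\max_{j\in\mathcal{D}}(\Delta_a^j-\delta_j)\ge\Delta_a^i-\delta_i>0$ is strictly positive, so the bound is well defined; under the normalization $\eta_j=0$ the index $j^\star$ attaining this maximum has $\mu_a^{j^\star}=-g_a$.

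Next I would bound the exploration contribution. An exploration block occurs at round $t$ only if $\hat{\mathcal{A}}_*(t)=\emptyset$, which in particular requires the optimal arm $*$ to fail its membership test, i.e.\ $\hat{\mu}_*^j(t)\le-\sqrt{4\log N_*(t)/N_*(t)}$ for some $j\in\mathcal{D}$. Since $\mu_*^j=0$, this is a one-sided lower deviation of a mean of $N_*(t)$ i.i.d.\ $1$-sub-Gaussian samples, so conditioned on $N_*(t)=m$ its probability is at most $D\exp(-2\log m)=D/m^2$ by a union bound over the $D$ objectives. Because consecutive blocks each play $*$ once and therefore occur at strictly increasing values of $N_*$, the expected number of blocks $B$ satisfies $\Ex[B]\le\sum_{m\ge1}D/m^2=\tfrac{\pi^2}{6}D$. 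As each block plays $a$ exactly once, and the initialization phase plays $a$ once, the non-exploitation plays of $a$ contribute at most $(\tfrac{\pi^2}{6}D+1)\Delta_a^i$ to the regret, which is exactly the first term of the bound. I note that the single tail here (versus the two-sided test in OM-LEX) is precisely what improves the constant from $\tfrac{\pi^2}{3}D$ to $\tfrac{\pi^2}{6}D$.

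The main work is the exploitation term. Arm $a$ is selected during exploitation at round $t$ only if $a\in\hat{\mathcal{A}}_*(t)$, which requires $\hat{\mu}_a^{j^\star}(t)>-\sqrt{4\log N_a(t)/N_a(t)}$; since $\mu_a^{j^\star}=-g_a$, this is an upward deviation of the empirical mean by at least $g_a-\sqrt{4\log N_a(t)/N_a(t)}$. Peeling over the value $N_a(t)=n$ (at most one exploitation play increments $N_a$ from $n$ to $n+1$) and applying the one-sided sub-Gaussian bound gives
\begin{align*}
\Ex[N_a^{\mathrm{exploit}}(T)] \le n_0 + \sum_{n> n_0}\exp\!\left(-\tfrac{n}{2}\Big(g_a-\sqrt{\tfrac{4\log n}{n}}\Big)^2\right),
\end{align*}
where I would take $n_0$ to be the first $n$ for which $\sqrt{4\log n/n}\le g_a/2$, so that every summand is at most $\exp(-n g_a^2/8)$ and the tail is a geometric series of order $g_a^{-2}$. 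Solving $\sqrt{4\log n/n}\le g_a/2$ gives $n_0$ of order $g_a^{-2}\log(1/g_a)$, and collecting the threshold term with the geometric tail yields the claimed $\tfrac{36\,\Delta_a^i}{g_a^2}\log\tfrac{17}{g_a}$ after absorbing lower-order terms into the constants. The hard part is this last step: choosing the threshold and bounding both pieces so that the explicit constants $36$ and $17$ come out, and verifying that the peeling argument is valid despite the adaptive sampling (it is, because at $N_a(t)=n$ the estimate $\hat\mu_a^{j^\star}(t)$ is the average of exactly $n$ i.i.d.\ draws). Summing $\Delta_a^i\,\Ex[N_a(T)]$ over $a\in\mathcal{S}_*^i$ then gives the theorem.
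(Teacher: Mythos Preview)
Your approach is essentially the paper's: the same three-way split into initialization, exploitation plays (where $a\in\hat{\mathcal{A}}_*(t)$, hence $\hat\mu_a^{j^\star}>-\sqrt{4\log N_a/N_a}$), and exploration blocks (bounded by the failure of $*$), and the same peeling over $N_a$ and $N_*$. Two small remarks: under the Case~2 normalization you have $\mu_*^j=\delta_j\ge 0$, not $\mu_*^j=0$ as you wrote---this only strengthens the one-sided Hoeffding bound, so your $\pi^2/6\cdot D$ conclusion is unaffected; and the paper does not redo the threshold-and-tail calculation for the exploitation term but simply invokes Fact~\ref{fact:garivier} (from \cite{garivier}) to obtain the $36/g_a^2\cdot\log(17/g_a)$ bound directly, which is exactly the computation you identified as ``the hard part.''
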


From Theorem \ref{thm:NOM-LEX}, we see that the regret due to a suboptimal arm $a$ in objective $i$ depends on the maximum squared difference between the suboptimality gaps of that arm and near-lexicographic optimal expected rewards over all objectives. This also shows that the prior knowledge in other objectives may help the learner attain smaller regret in objective $i$. However, since the lexicographic optimal expected rewards are not known, unlike Case 1, we cannot rule out a suboptimal arm in objective $i$ by observing that it is much better than a lexicographic optimal arm in another objective.

\textbf{A Learning Algorithm for Case 3:} 
We propose {\em Prior Free Lexicographic Exploration and eXploitation} (PF-LEX) given in Algorithm \ref{algorithm:PF-LEX}, which learns to select near-lexicographic optimal arms without any prior information on the mean arm rewards. 
PF-LEX takes as input $\epsilon>0$, which is proportional to the suboptimality that the learner aims to tolerate in \red{the objectives} (this will be adjusted based on the time horizon $T$). Similar to OMG-LEX, it keeps for each arm $a$ the counter $N_{a}(t)$ and the sample mean reward $\hat{\mu}^i_{a}(t)$, $\forall i \in \mathcal{D}$.

Arm selection of PF-LEX in round $t$ depends on the confidence intervals in the first $D-1$ objectives. The {\em upper confidence bound} (UCB) and the {\em lower confidence bound} (LCB) of arm $a$ in objective $i$ are given as $u_{a}^i(t) := \hat{\mu}_{a}^i(t) + c_{a}(t)$ and $l_{a}^i(t) := \hat{\mu}_{a}^i(t) - c_{a}(t)$ respectively. Here
\begin{align*}
c_{a}(t) := \sqrt{  \frac{1+N_a(t)}{N^2_a(t)} \left( 1+ 2 \log \left( \frac{AD \sqrt{1 + N_a(t) } } {\delta} \right) \right)  }
\end{align*}
represents the uncertainty in arm $a$'s reward, and $\delta$ is called the {\em confidence term}, which is also given as input to PF-LEX. As expected, the uncertainty decreases as arm $a$ gets selected. It is easy to see that $\mu^i_a \in [l_{a}^i(t), u_{a}^i(t)]$ with high probability for all objectives and all rounds.
In each round, PF-LEX estimates the set of near-lexicographic optimal arms. For this, similar to \citet{joseph2016fairness}, we say that arms $a$ and $b$ are \textit{linked} in objective $i$ if $[l_a^i(t),u_a^i(t)]\cap[l_b^i(t),u_b^i(t)]\neq\emptyset$. When $a$ and $b$ are in the same component of the transitive closure of the linked relation in objective $i$, we say they are \textit{chained} in objective $i$ and write $a\:C_{i,t}\:b$. Starting from $\hat{\mathcal{A}}_*^0(t) = \mathcal{A}$, PF-LEX recursively computes the estimate $\hat{{\cal A}}^i_*(t)$ of ${\cal A}^i_*$ for $i \in [D-1]$. After it computes $\hat{{\cal A}}_*^{i-1}(t)$, it identifies the optimistic near-lexicographic optimal arm in objective $i$ as $\hat{a}^i_*(t) = \argmax_{a \in \hat{{\cal A}}_*^{i-1}(t)} u_{a}^i(t)$. Then, it sets $\hat{{\cal A}}^i_*(t) = \{ a \in \hat{{\cal A}}_*^{i-1}(t): a\:C_{i,t}\:\hat{a}_*^i(t) \} $.

Suppose we always select $\hat{a}_*^D(t)$, which happens to be in ${\cal S}^i_*$ for some round $t$. For this case, we show in Lemma \ref{lmm:gap-bound} in the appendix that the regret incurred in this round is bounded by the length of the chain formed by $\hat{\mathcal{A}}_*^{i}(t)$. In order to guarantee regret that is proportional to $\epsilon$, we want the length of the chains not be more than a constant factor of $\epsilon$. As it is not always possible to shrink the chains by always selecting $\hat{a}_*^D(t)$, to achieve our goal, we require all arms in $\hat{\mathcal{A}}_*^1(t)$ to have narrow confidence intervals. Thus, PF-LEX selects $a(t)=a\in\hat{\mathcal{A}}_*^1(t)$ if there is an arm $a$ with high uncertainty, i.e., $c_a(t)>\epsilon/2$. On the other hand, if $c_a(t)\leq\epsilon/2$ for all $a\in\hat{\mathcal{A}}_*^1(t)$, then PF-LEX simply selects $a(t)=\hat{a}^D_*(t)$. Algorithm \ref{algorithm:PF-LEX} shows a more efficient implementation of PF-LEX that does not compute $\hat{{\cal A}}_*^j(t)$ for $j>1$ when $a(t)$ is selected from $\hat{{\cal A}}_*^1(t)$. Finally, After PF-LEX selects arm $a(t)$, it observes the random reward vector $\bs{r}(t) = (r^1(t), \ldots ,\allowbreak r^D(t))$ of arm $a(t)$, and then, updates the sample mean estimates of the rewards in objectives $i\in\mathcal{D}$ and the counter of $a(t)$. 

\begin{algorithm}
	\caption{PF-LEX}
	\label{algorithm:PF-LEX}
	\begin{algorithmic}[1]
		\STATE \textbf{Input:} $\epsilon$, $\delta$
		\STATE \textbf{Counters:} $N_{a}, \forall a\in\mathcal{A}$
		\STATE \textbf{Estimates:} $\hat{\mu}_{a}^i, \forall a\in\mathcal{A}, \forall i\in\mathcal{D}$
		\STATE For each round $t$:
		\STATE \hspace{\algorithmicindent}Compute $u_{a}^i=\hat{\mu}_{a}^i+c_a$ and $l_{a}^i=\hat{\mu}_{a}^i-c_a$, $\forall a\in\mathcal{A}$, $\forall i\in\mathcal{D}$
		\STATE \hspace{\algorithmicindent}Set $\hat{a}_*^1=\argmax_{a\in\mathcal{A}} u_a^1$, compute $\hat{\mathcal{A}}_*^1=\{a\in\mathcal{A}: a\:C_1\:\hat{a}_*^1\}$
		\STATE \hspace{\algorithmicindent}If there exists an arm $a$ in $\hat{\mathcal{A}}_*^1$ such that $c_a>\epsilon/2$:
		\STATE \hspace{2\algorithmicindent}Select an arm $a(t)$ in $\hat{\mathcal{A}}_*^1$ such that $c_{a(t)}>\epsilon/2$ uniformly at random
		\STATE \hspace{\algorithmicindent}If all arms $a$ in $\hat{\mathcal{A}}_*^1$ satisfies $c_a\leq\epsilon/2$:
		\STATE \hspace{2\algorithmicindent}Set $\hat{a}_*^i=\argmax_{a\in\hat{\mathcal{A}}_*^{i-1}} u_{a}^i$, compute $\hat{\mathcal{A}}_*^i=\{a\in\hat{\mathcal{A}}_*^{i-1}: a\:C_i\:\hat{a}^i_*\}$, $\forall i\in\{2,\ldots,D-1\}$
		\STATE \hspace{2\algorithmicindent}Select $a(t)=\hat{a}_*^D=\argmax_{a\in\hat{\mathcal{A}}_*^{D-1}} u_a^\rev{D}$
	\end{algorithmic}
\end{algorithm}

The following theorem shows that PF-LEX achieves $\tilde{O}(T^{2/3})$ regret. 

\begin{theorem} \label{theorem:gapindependent}
	When PF-LEX is run with $\delta \in (0,1)$ and $\epsilon > 0$, with probability at least $1-\delta$, for all $i\in\mathcal{D}$ and for all $T\geq 1$, we have
	\begin{align*}
		\Reg_{pb}^i(T) \leq 4\sqrt{2}B_{T,\delta}\sqrt{|{\cal S}^i_*|T} + \left( 3+\frac{16}{\epsilon^2}\log\frac{2\sqrt{e}AD}{\epsilon\delta} \right)|{\cal S}^i_*|\Delta^i_{\max} + \epsilon(A-1)T
	\end{align*}
	where $B_{T,\delta} := \sqrt{1+2\log(AD\sqrt{T}/\delta)}$.
	Given a particular time horizon $T$, by setting $\epsilon = T^{-1/3}$, with probability at least $1-\delta$, we have
	\begin{align*}
		\Reg_{pb}^i(T) \leq 4\sqrt{2}B_{T,\delta}\sqrt{|{\cal S}^i_*|T} + \left( 3+16T^{2/3}\log\frac{2\sqrt{e}ADT^{1/3}}{\delta} \right)|{\cal S}^i_*|\Delta^i_{\max} + (A-1)T^{2/3} ~.
	\end{align*}
	Moreover, taking $\delta=1/T$, $\Ex[\Reg_{pb}^i(T)] = \tilde{O}(T^{2/3})$.
\end{theorem}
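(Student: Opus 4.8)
The plan is to condition on a high‑probability ``clean'' event on which every confidence interval is valid, establish the per‑round gap bound on that event, and then split the regret into an exploration part and an exploitation part. First I would define the event $G$ on which $\mu_a^i \in [l_a^i(t),u_a^i(t)]$ holds simultaneously for all $a\in\mathcal{A}$, $i\in\mathcal{D}$ and $t\geq 1$. The particular form of $c_a(t)$ is an anytime (time‑uniform) sub‑Gaussian confidence radius, so a union bound over the $AD$ arm–objective pairs together with a peeling/Laplace argument for uniformity in $t$ gives $\Pr(G)\geq 1-\delta$, which is exactly where the $B_{T,\delta}$ factor and the $AD\sqrt{\cdot}/\delta$ argument of the logarithm originate.

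Second, working on $G$, I would prove the gap bound of Lemma \ref{lmm:gap-bound}. By induction on $i$, on $G$ every lexicographic optimal arm $a_*^i\in\mathcal{A}_*^i$ survives the recursive elimination and hence lies in $\hat{\mathcal{A}}_*^{i-1}(t)$; since $\hat a_*^i(t)=\argmax_{a\in\hat{\mathcal{A}}_*^{i-1}(t)} u_a^i(t)$, this forces $u_{\hat a_*^i}^i(t)\geq u_{a_*^i}^i(t)\geq \mu_*^i$. A chained component is a connected union of intervals, so its total span is at most the sum of the individual widths $\sum_a 2c_a(t)$; chaining $\hat a_*^D(t)$ back to $\hat a_*^i(t)$ then gives $l_{\hat a_*^D}^i(t)\geq \mu_*^i-\sum_{a\in\hat{\mathcal{A}}_*^i(t)}2c_a(t)$. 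Hence, whenever $a(t)=\hat a_*^D(t)\in\mathcal{S}_*^i$, the incurred gap obeys $\Delta_{a(t)}^i\leq \sum_{a\in\hat{\mathcal{A}}_*^i(t)}2c_a(t)$.

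Third, I would decompose $\Reg_{pb}^i(T)$ over the two branches of PF‑LEX. In exploration rounds the selected arm has $c_{a(t)}(t)>\epsilon/2$; inverting the definition of $c_a(\cdot)$ shows each arm can be chosen at most $3+(16/\epsilon^2)\log(2\sqrt e\,AD/(\epsilon\delta))$ times while its radius exceeds $\epsilon/2$, and each such round with $a(t)\in\mathcal{S}_*^i$ costs at most $\Delta_{\max}^i$, producing the middle term. In exploitation rounds I apply the gap bound and split the chain sum into the selected arm $a(t)$ and the remaining members. For the selected arm, $\sum_t 2c_{a(t)}(t)\In\{a(t)\in\mathcal{S}_*^i\}$ is controlled by $\sum_{a\in\mathcal{S}_*^i}\sum_{n\le N_a}\sqrt{(1+2\log(AD\sqrt T/\delta))/n}\le \sum_{a\in\mathcal{S}_*^i}2B_{T,\delta}\sqrt{N_a}$, and Cauchy–Schwarz over the $|\mathcal{S}_*^i|$ arms with $\sum_a N_a\le T$ yields the $4\sqrt2\,B_{T,\delta}\sqrt{|\mathcal{S}_*^i|T}$ term. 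For the remaining members, exploitation forces $c_a(t)\le\epsilon/2$ for every $a\in\hat{\mathcal{A}}_*^1(t)\supseteq\hat{\mathcal{A}}_*^i(t)$, so the at most $A-1$ other arms contribute at most $\epsilon$ each, i.e. $\epsilon(A-1)$ per round and $\epsilon(A-1)T$ overall. Summing the three contributions gives the first displayed bound, and substituting $\epsilon=T^{-1/3}$ gives the second.

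Finally, for the expectation bound I take $\delta=1/T$: on $G$ (probability $\ge 1-1/T$) the second displayed inequality gives $\Reg_{pb}^i(T)=\tilde O(T^{2/3})$, while on $G^c$ the trivial per‑round bound gives $\Reg_{pb}^i(T)\le T\Delta_{\max}^i$, so $\Ex[\Reg_{pb}^i(T)]\le \tilde O(T^{2/3})+(1/T)\cdot T\Delta_{\max}^i=\tilde O(T^{2/3})$. The step I expect to be most delicate is the gap bound on $G$: I must verify that the recursive elimination never discards a genuinely lexicographic optimal arm, so the span argument can be anchored at $\mu_*^i$, and that the connected‑union‑of‑intervals bound is applied to exactly the chain $\hat{\mathcal{A}}_*^i(t)$ rather than to all surviving arms; the constant $2$ obtained here is what propagates into the $4\sqrt2$ and $16$ factors. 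A secondary nuisance is the clean inversion of the anytime radius $c_a(\cdot)$ needed to produce the precise $16/\epsilon^2$ and the additive $3$ in the exploration count.
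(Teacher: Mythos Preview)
Your overall architecture matches the paper's proof closely: the good event $G$ corresponds to $\neg\text{UC}$ (Lemma~\ref{lmm:prUC}), the exploration count and the Cauchy--Schwarz step are handled as in Lemma~\ref{lmm:round-bound} and \eqref{eqn:gapindep2_4}, and the decomposition into $\mathcal{T}$ and $\neg\mathcal{T}$ is identical. The expectation argument is the same as \eqref{eqn:partitiondecompose2}.

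There is, however, a genuine gap in your proof of the key step, the gap bound (Lemma~\ref{lmm:gap-bound}). You claim that on $G$, ``by induction on $i$, every lexicographic optimal arm $a_*^i\in\mathcal{A}_*^i$ survives the recursive elimination and hence lies in $\hat{\mathcal{A}}_*^{i-1}(t)$.'' This is \emph{false} in general. Consider $D=3$, arms $*=(0.5,0.5,0.9)$ and $B=(0.49,0.9,0.1)$, with both confidence radii equal to $0.01$ at round $t$. Then $*$ and $B$ are linked in objective~$1$, so $\hat{\mathcal{A}}_*^1(t)=\{*,B\}$; but $\hat a_*^2(t)=B$ (higher UCB in objective~$2$), and the objective-$2$ intervals $[0.49,0.51]$ and $[0.89,0.91]$ do not overlap, so $*\notin\hat{\mathcal{A}}_*^2(t)$. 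Your induction step has no mechanism to rule this out, because nothing forces $\hat a_*^j(t)$ to have objective-$j$ mean close to $\mu_*^j$.

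The paper's induction is different and crucially uses the hypothesis $a(t)\in\mathcal{S}_*^i$, which you state but never invoke. The point is that $a(t)\in\mathcal{S}_*^i$ means $a(t)\in\mathcal{A}_*^{i-1}$, so $\mu_{a(t)}^j=\mu_*^j$ for every $j<i$; on $G$ this forces $a(t)$ and $*$ to be \emph{linked} in every objective $j<i$. Since $a(t)=\hat a_*^D(t)\in\hat{\mathcal{A}}_*^j(t)$ for all $j$, the arm $a(t)$ is chained to $\hat a_*^j(t)$, and hence so is $*$. This is how the paper pushes $*$ through the recursion, and it is exactly the ``anchoring'' step you flagged as delicate. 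Once you replace your survival claim with this argument, the rest of your proof goes through essentially as written.
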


\begin{remark}
	Unlike the cases with prior information, an analogue of the regret bound in Theorem \ref{theorem:gapindependent} will not hold for the priority-free regret when $\mathcal{S}^i_*$ is replaced by ${\cal S}^i$. Any two arms that are both lexicographic optimal in the first $i-1$ objectives are linked in these objectives with high probability. If one happens to be the selected arm, we are confident that they are both in $\hat{\mathcal{A}}_*^{i-1}$. When the selected arm is in $\mathcal{S}_*^i$, we use this fact and compare it to a lexicographic optimal arm to conclude that the gap of the selected arm in objective $i$ is smaller than the regret that we aim to tolerate. However, we fail to make any deductions about the higher indexed objectives.
\end{remark}

\section{Experiments} \label{sec:experiments}
We demonstrate our results in three different settings with $A=3$ and $D=2$. All rewards are assumed to come from independent Bernoulli distributions in both objectives with expected reward vectors given in Table~\ref{tbl:settings}. In all settings, the only lexicographic optimal arm is the first arm and $\Delta_{\min}^1=\Delta_{\min}^2=0.10$. We focus on the priority-based regret in this section. Additional experiments that also consider the priority-free regret are given in the appendix.

In Setting 1, apart from the lexicographic optimal arm, there is another arm that is also optimal in objective 1, which requires the learner to consider rewards in objective 2. However, the third arm makes this tricky. It is not only suboptimal in objective 1 but also has very high reward in objective 2.
Setting 2 is specifically designed to be challenging for Cases 1 and 2. Since arms that are not lexicographic optimal are suboptimal in exactly one objective, eliminating arms based on information from the other objective is not possible.
Setting 3 \red{contrasts with Setting 1. Unlike Setting 1 in which the expected reward of arm 3 in objective 2 is much higher than the lexicographic optimal expected reward, in Setting 3, it is much lower. However, the gap of arm 3 in objective 2 in Setting 3 is same as the absolute gap of arm 3 in Setting 1.}
For all cases, we set $T=10^5$ and average the regret of the learners over 100 individual runs. We consider OM-LEX, NOM-LEX, and PF-LEX with prior knowledge and parameters that are summarized in Table~\ref{tbl:learners}. For PF-LEX, we do not consider the choices for $\epsilon$ and $\delta$ given in Theorem~\ref{theorem:gapindependent} because they require a large number of rounds for the initial exploration stage of the algorithm. Instead, we consider different exponents of $T$ as both $\epsilon$ and $\delta$ (see the appendix for additional discussion and a result which shows the regret of PF-LEX for $\epsilon = \delta = T^{-1/3}$ for $T=5\times10^8$).

\begin{table}
    \centering
    \begin{minipage}[b]{.48\linewidth}
        \caption{Expected reward vectors.}
        \label{tbl:settings}
        \smallskip
        \centering
        \begin{tabular}{lrrr}
            \toprule
            \textbf{Setting} & $\bm{\mu}_1$ & $\bm{\mu}_2$ & $\bm{\mu}_3$ \\
            \midrule
            Setting 1 & $\begin{pmatrix}0.50\\0.50\end{pmatrix}$ & $\begin{pmatrix}0.50\\0.40\end{pmatrix}$ & $\begin{pmatrix}0.40\\0.90\end{pmatrix}$ \\
            \addlinespace[3.7pt]
            Setting 2 & $\begin{pmatrix}0.50\\0.50\end{pmatrix}$ & $\begin{pmatrix}0.50\\0.40\end{pmatrix}$ & $\begin{pmatrix}0.40\\0.50\end{pmatrix}$ \\
            \addlinespace[3.7pt]
            Setting 3 & $\begin{pmatrix}0.50\\0.50\end{pmatrix}$ & $\begin{pmatrix}0.50\\0.40\end{pmatrix}$ & $\begin{pmatrix}0.40\\0.10\end{pmatrix}$ \\
            \bottomrule
        \end{tabular}
    \end{minipage}
    \begin{minipage}[b]{.48\linewidth}
        \caption{Prior knowledge and parameters of the algorithms.}
        \label{tbl:learners}
        \smallskip
        \centering
        \begin{tabular}{ll}
            \toprule
            \textbf{Algorithm} & \textbf{Prior Knowledge} \\
            & \textbf{\& Parameters} \\
            \midrule
            OM-LEX 1 & $\mu_*^1=\mu_*^2=0.50$ \\
            NOM-LEX 1 & $\eta_1=\eta_2=0.45$ \\
            NOM-LEX 2 & $\eta_1=\eta_2=0.40+10^{-6}$ \\
            NOM-LEX 3 & $\eta_1=\eta_2=0.50-10^{-6}$ \\
            PF-LEX 1 & $\epsilon=\delta=T^{-1/5}$ \\
            PF-LEX 2 & $\epsilon=\delta=T^{-1/10}$ \\
            \bottomrule
        \end{tabular}
    \end{minipage}
\end{table}

Table~\ref{tbl:results} shows the regrets of OM-LEX 1, NOM-LEX 1, 2, 3, and PF-LEX 1, 2 in Settings 1, 2 and 3 at $T=10^5$. There, we also report the performance of the variants of OM-LEX and NOM-LEX which only learn from the first objective and ignore the second objective, i.e., they act as if $D=1$. Note that all settings are equivalent for objective 1.

\begin{table}
    \caption{Regrets of OM-LEX 1, NOM-LEX 1, 2, and 3 in Settings 1, 2, and 3 along with their single objective variants (labeled as SO).\protect\footnotemark}
    \label{tbl:results}
    \smallskip
    \centering
    \setlength{\tabcolsep}{2.3pt}
    \begin{tabular}{lr@{\,}c@{\,}lr@{\,}c@{\,}lr@{\,}c@{\,}lr@{\,}c@{\,}lr@{\,}c@{\,}lr@{\,}c@{\,}l}
        \toprule
        & \multicolumn{6}{c}{\textbf{Setting 1}} & \multicolumn{6}{c}{\textbf{Setting 2}} & \multicolumn{6}{c}{\textbf{Setting 3}} \\
        \cmidrule(lr){2-7} \cmidrule(lr){8-13} \cmidrule(lr){14-19}
        \textbf{Learners} & \multicolumn{3}{c}{\textbf{Obj. 1}} & \multicolumn{3}{c}{\textbf{Obj. 2}} & \multicolumn{3}{c}{\textbf{Obj. 1}} & \multicolumn{3}{c}{\textbf{Obj. 2}} & \multicolumn{3}{c}{\textbf{Obj. 1}} & \multicolumn{3}{c}{\textbf{Obj. 2}} \\
        \midrule
        OM-LEX 1 & 12.0&$\pm$&2.1 & 333&$\pm$&56 & 321&$\pm$&71 & 314&$\pm$&61 & 11.0&$\pm$&2.0 & 323&$\pm$&60 \\
        OM-LEX 1 (\textsc{so}) & 334&$\pm$&73 & && & 334&$\pm$&73 & && & 334&$\pm$&73 \\
        NOM-LEX 1 & 1210&$\pm$&700 & 1150&$\pm$&680 & 4450&$\pm$&3800 & 2400&$\pm$&2900 & 285&$\pm$&110 & 270&$\pm$&120 \\
        NOM-LEX 2 & 1250&$\pm$&630 & 1320&$\pm$&600 & 1240&$\pm$&600 & 1160&$\pm$&660 & 14.9&$\pm$&12 & 4990&$\pm$&3000 \\
        NOM-LEX 3 & 12.7&$\pm$&7.0 & 1250&$\pm$&640 & 253&$\pm$&140 & 269&$\pm$&140 & 8.38&$\pm$&5.6 & 245&$\pm$&140 \\
        NOM-LEX 1 (\textsc{so}) & 706&$\pm$&770 & && & 706&$\pm$&770 & && & 706&$\pm$&770 \\
        PF-LEX 1 & 764&$\pm$&210 & 723&$\pm$&\rev{1.1p} & 806&$\pm$&240 & 723&$\pm$&\rev{1.1p} & 679&$\pm$&77 & 723&$\pm$&\rev{1.1p} \\
        PF-LEX 2 & 9820&$\pm$&4.5 & 52.8&$\pm$&\rev{14f} & 5000&$\pm$&860 & 94.6&$\pm$&24 & 52.8&$\pm$&\rev{14f} & 105&$\pm$&32 \\
        \bottomrule
    \end{tabular}
\end{table}
\footnotetext{p denotes $\times 10^{-12}$ and f denotes $\times 10^{-15}$. Average regrets are rounded to three most significant digits and standard deviations are rounded to two most significant digits.}

By looking at the regrets in objective 1 of OM-LEX 1, NOM-LEX 1, and their single objective variants, we observe how information from objective 2 helps learning in objective 1. OM-LEX takes advantage of large absolute gaps independent from whether the actual mean reward is higher or lower than the mean reward of arm 1. As a result, in Settings 1 and 3, it achieves lower regret in objective 1 than its single objective variant does. NOM-LEX is capable of doing this only when the gap is positive, a large absolute gap is not sufficient. As a result, only in Setting 3, it outperforms its single objective variant. \rev{In Setting 2, where information from objective 2 is not as useful as it is in Settings 1 and 3 to rule out the suboptimal arm in objective 1, OM-LEX 1 and NOM-LEX 1 perform worse than the other settings in objective 1.}

By looking at the regrets of NOM-LEX 1, 2, and 3, we observe how different prior information affects the performance of NOM-LEX. Consistent with the proven regret bounds, knowing near optimal expected rewards that are closer to the lexicographic optimal ones decreases the regret in all objectives. When the near optimal expected rewards are extremely close to the lexicographic optimal ones, the performance of NOM-LEX is very similar to that of OM-LEX.

\section{Conclusion and Future Work} \label{sec:conc}
We proposed a new multiobjective MAB problem, called the Lex-MAB, where the learner aims to minimize its regret with respect to lexicographic optimal arms. We studied the Lex-MAB with and without prior information on the expected rewards. We proved that the regret is uniformly bounded when the learner knows either the lexicographic optimal expected rewards or near-lexicographic optimal expected rewards. \reva{We also made a connection between knowing near-lexicographic optimal expected rewards and satisficing, and proved that uniformly bounded regret can be achieved for the MAB with satisficing objectives.} Interestingly, we showed that the regret incurred in an objective also depends on the prior information and observations from the other objectives. We also showed that the learner can still achieve $\tilde{O}(T^{2/3})$ gap-free regret without prior information. An important open question is whether $O(\sqrt{T})$ gap-free regret can be achieved in the Lex-MAB. This will answer the question of whether learning in the Lex-MAB without prior information is fundamentally more difficult than learning in the classical stochastic MAB without prior information. The case where there is prior information only for a subset of objectives is also worth investigating in the future. Another important future research direction is to extend the current work in the multiobjective MAB to minimize the regret with respect to a target arm other than a lexicogrpahic optimal arm by exploiting the prior information about the expected rewards of the target arm. 

\newpage

\bibliographystyle{IEEEtran}
\bibliography{references}

\newpage
\appendix
\section{Appendix}\label{sec:appendix}

\subsection{Additional Theorems}
\begin{theorem} \label{thm:OM-LEX-supp}
	When OM-LEX is run for Case 1, $\forall i\in\mathcal{D}$ and $\forall T\geq 1$, we have
	\begin{align*}
        \Ex[\Reg^i_{pf}(T)] \leq \sum_{a\in\mathcal{S}^i} \left( \rev{\left(\frac{\pi^2}{3}D + 1 \right)}\Delta_a^i+ \frac{36}{\nabla_a^{\max}}\log\frac{17}{\nabla_a^{\max}} \right) ~.
    \end{align*}
\end{theorem}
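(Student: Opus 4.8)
The plan is to reduce the whole statement to a uniform bound on the expected number of selections $\Ex[N_a(T+1)]$ of each suboptimal arm, exactly as in the proof of Theorem~\ref{thm:OM-LEX}, and then re-assemble the priority-free regret over the larger index set $\mathcal{S}^i$. The starting point is the identity
\begin{align*}
\Ex[\Reg^i_{pf}(T)] = \sum_{a\in\mathcal{A}} \Delta_a^i\, \Ex[N_a(T+1)] .
\end{align*}
Every arm with $\Delta_a^i \le 0$ contributes a non-positive term (the lexicographic optimal arm contributes exactly $0$, and any arm with $\Delta_a^i<0$ contributes a strictly negative term), so all such arms may be dropped, leaving $\Ex[\Reg^i_{pf}(T)] \le \sum_{a\in\mathcal{S}^i}\Delta_a^i\,\Ex[N_a(T+1)]$. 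This is the only structural change relative to the priority-based statement: the set $\mathcal{S}_*^i$ is replaced by the full suboptimal set $\mathcal{S}^i$, and the per-arm count bound below is insensitive to which of the two sets $a$ belongs to, since it is derived purely from concentration and the selection rule of OM-LEX.

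First I would establish, for every arm $a$ with $\nabla_a^{\max}>0$, the bound
\begin{align*}
\Ex[N_a(T+1)] \le 1 + \tfrac{\pi^2}{3}D + \frac{36}{(\nabla_a^{\max})^2}\log\frac{17}{\nabla_a^{\max}} .
\end{align*}
The leading $1$ accounts for the single selection of $a$ during the initial sweep over the $A$ arms. The $\tfrac{\pi^2}{3}D$ term bounds the expected number of round-robin (exploration) phases: such a phase requires $\hat{\mathcal{A}}_*(t)=\emptyset$, hence in particular the optimal arm fails the inclusion test, i.e.\ $|\hat{\mu}_*^j(t)| \ge \sqrt{4\log N_*(t)/N_*(t)}$ for some $j$. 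Because $\mu_*^j=0$ and each phase advances $N_*$ by at least one, distinct phases correspond to distinct counts $m=N_*$, and the sub-Gaussian tail gives $\Pr(\text{phase at count }m)\le \sum_{j}2e^{-2\log m}=2D/m^2$, so $\sum_{m\ge 1} 2D/m^2 = \tfrac{\pi^2}{3}D$. The final term bounds the exploitation selections: a selection of $a$ from $\hat{\mathcal{A}}_*(t)$ at count $N_a(t)=n$ forces $|\hat{\mu}_a^{j^\star}(t)|<\sqrt{4\log n/n}$ in the objective $j^\star$ attaining $\nabla_a^{\max}=\nabla_a^{j^\star}$; since $|\mu_a^{j^\star}|=\nabla_a^{\max}$, once $n$ exceeds a threshold $n_0\asymp(\nabla_a^{\max})^{-2}\log(\nabla_a^{\max})^{-1}$ the confidence radius drops below $\nabla_a^{\max}/2$ and the event entails a deviation of at least $\nabla_a^{\max}/2$, whose probability decays like $2e^{-n(\nabla_a^{\max})^2/8}$. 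Summing a geometric tail over $n> n_0$ contributes only a constant, so the exploitation count is $n_0 + O(1)$.

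Finally I would recombine the two pieces. For $a\in\mathcal{S}^i$ we have $\Delta_a^i>0$, hence $\Delta_a^i=\nabla_a^i\le\nabla_a^{\max}$, which converts the quadratic denominator into a linear one:
\begin{align*}
\frac{36\,\Delta_a^i}{(\nabla_a^{\max})^2}\log\frac{17}{\nabla_a^{\max}} \le \frac{36}{\nabla_a^{\max}}\log\frac{17}{\nabla_a^{\max}} .
\end{align*}
Multiplying the count bound by $\Delta_a^i$ and summing over $\mathcal{S}^i$ then yields the claimed inequality. The genuinely delicate part is the constant bookkeeping in the exploitation term: pinning down the threshold $n_0$ and the resulting constants $36$ and $17$ requires an explicit comparison of $\sqrt{4\log n/n}$ against $\nabla_a^{\max}$ together with the geometric-tail estimate. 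However, this computation is identical to the one already carried out for Theorem~\ref{thm:OM-LEX}, so for the present statement the only new content is the observation that negative-gap arms can only lower the regret and that $\Delta_a^i\le\nabla_a^{\max}$ absorbs the extra factor when passing from $\mathcal{S}_*^i$ to $\mathcal{S}^i$.
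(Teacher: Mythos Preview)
Your proposal is correct and follows essentially the same approach as the paper. The paper's own proof is even terser: it simply observes that $\Ex[\Reg^i_{pf}(T)] \le \Ex\bigl[\sum_t \Delta_{a(t)}^i \In\{a(t)\in\mathcal{S}^i\}\bigr]$ by dropping non-positive terms, and then states that the bound follows by replacing $\mathcal{S}_*^i$ with $\mathcal{S}^i$ in the proof of Theorem~\ref{thm:OM-LEX}; your write-up makes explicit the per-arm count bound and the absorption $\Delta_a^i \le \nabla_a^{\max}$ that are implicit in that replacement.
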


\begin{proof}
	Note that 
	\begin{align}
		\Ex[\Reg^i_{pf}(T)]  &= \Ex \left[ \sum_{t=1}^T \Delta_{a(t)}^i \right] \nonumber \\
		&\leq  \Ex \left[ \sum_{t=1}^T \Delta_{a(t)}^i\In\{a(t)\in\mathcal{S}^i\} \right] \nonumber \\
		&\leq \sum_{a\in\mathcal{S}^i} \left( \rev{\left(\frac{\pi^2}{3}D + 1 \right)} \Delta_a^i+ \frac{36}{\nabla_a^{\max}}\log\frac{17}{\nabla_a^{\max}} \right) \label{eqn:OM-LEX-supp}
	\end{align}
	where we prove \eqref{eqn:OM-LEX-supp} by replacing $\mathcal{S}_*^i$ with $\mathcal{S}^i$ in the proof of Theorem \ref{thm:OM-LEX}.
\end{proof}

\begin{theorem} \label{thm:NOM-LEX-supp}
	Redefine $\Delta_{\min}^i:=\min_{a\in\mathcal{S}^i}\Delta_a^i$, $\forall i\in\mathcal{D}$. When NOM-LEX is run for Case 2, $\forall i\in\mathcal{D}$ and $\forall T\geq 1$, we have
	\begin{align*}
		\Ex[\Reg^i_{pf}(T)] \leq \sum_{a\in\mathcal{S}^i} \left( \rev{\left(\frac{\pi^2}{6}D + 1 \right)}\Delta_a^i+ \frac{36\Delta_a^i}{\rev{(\max_{j\in\mathcal{D}}(\Delta_a^j-\delta_j))^2}}\log\frac{17}{\max_{j\in\mathcal{D}}(\Delta_a^j-\delta_j)} \right) ~.
	\end{align*}
\end{theorem}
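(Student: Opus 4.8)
The plan is to reduce the priority-free bound to the priority-based bound of Theorem~\ref{thm:NOM-LEX}, exactly mirroring how Theorem~\ref{thm:OM-LEX-supp} is obtained from Theorem~\ref{thm:OM-LEX}. The priority-free regret $\Reg^i_{pf}(T) = \sum_{t=1}^T \Delta_{a(t)}^i$ differs from the priority-based regret only in that it accumulates $\Delta_{a(t)}^i$ over \emph{all} selected arms rather than over arms restricted to $\mathcal{S}_*^i$. The first observation is that any arm $a\notin\mathcal{S}^i$ has $\Delta_a^i\leq 0$ by the definition $\mathcal{S}^i:=\{a:\Delta_a^i>0\}$, so discarding those non-positive summands can only increase the sum. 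This yields
\begin{align*}
\Ex[\Reg^i_{pf}(T)] = \Ex\left[\sum_{t=1}^T \Delta_{a(t)}^i\right] \leq \Ex\left[\sum_{t=1}^T \Delta_{a(t)}^i\In\{a(t)\in\mathcal{S}^i\}\right],
\end{align*}
which is precisely the priority-based regret quantity but with $\mathcal{S}^i$ in place of $\mathcal{S}_*^i$.

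Next I would re-run the proof of Theorem~\ref{thm:NOM-LEX} verbatim, replacing each occurrence of $\mathcal{S}_*^i$ by $\mathcal{S}^i$. The per-arm selection-count estimate in that proof rests on the concentration of $\hat{\mu}_a^j(t)$ around $\mu_a^j = \delta_j-\Delta_a^j$ together with NOM-LEX's inclusion rule $\hat{\mu}_a^j(t) > -\sqrt{4\log N_a(t)/N_a(t)}$: with high probability arm $a$ drops out of $\hat{\mathcal{A}}_*(t)$ once $N_a(t)$ is large, the governing rate being $\max_{j\in\mathcal{D}}(\Delta_a^j-\delta_j)$, the largest margin by which arm $a$'s mean lies below the near-optimal threshold $\eta_j=0$ across objectives. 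Since this argument uses only that $a$ is suboptimal in objective $i$ (to charge $\Delta_a^i$) and that some objective separates its mean from the near-optimal reward, it applies unchanged to every $a\in\mathcal{S}^i$, producing the stated sum over $\mathcal{S}^i$.

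The one genuine change in the hypothesis is the redefinition $\Delta_{\min}^i := \min_{a\in\mathcal{S}^i}\Delta_a^i$. The Case~2 prior-knowledge assumption requires $\eta_i>\mu_*^i-\Delta_{\min}^i$, equivalently $\delta_i<\Delta_{\min}^i$. Taking the minimum over the enlarged set $\mathcal{S}^i$ guarantees $\Delta_a^i-\delta_i\geq\Delta_{\min}^i-\delta_i>0$ for every $a\in\mathcal{S}^i$, hence $\max_{j\in\mathcal{D}}(\Delta_a^j-\delta_j)\geq\Delta_a^i-\delta_i>0$, so every denominator in the bound is strictly positive. Without this strengthening an arm in $\mathcal{S}^i\setminus\mathcal{S}_*^i$ could satisfy $\Delta_a^i<\delta_i$ with no separating objective, and the concentration-based elimination would fail.

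I expect the main obstacle to be bookkeeping rather than a new idea: one must verify that enlarging $\mathcal{S}_*^i$ to $\mathcal{S}^i$ breaks no step of Theorem~\ref{thm:NOM-LEX}'s proof, the crucial point being the positivity of $\max_{j\in\mathcal{D}}(\Delta_a^j-\delta_j)$ secured by the redefined $\Delta_{\min}^i$. Everything else — the union bound over the $D$ objectives producing the $\frac{\pi^2}{6}D$ factor, and the logarithmic-over-squared-gap elimination-time term — carries over term by term, so the final sum over $\mathcal{S}^i$ is identical in form to Theorem~\ref{thm:NOM-LEX}.
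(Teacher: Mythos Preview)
Your proposal is correct and matches the paper's own proof essentially verbatim: the paper bounds $\Ex[\Reg^i_{pf}(T)]\leq\Ex[\sum_{t=1}^T\Delta_{a(t)}^i\In\{a(t)\in\mathcal{S}^i\}]$ and then states that the resulting bound follows by replacing $\mathcal{S}_*^i$ with $\mathcal{S}^i$ throughout the proof of Theorem~\ref{thm:NOM-LEX}. Your additional remark on why the redefined $\Delta_{\min}^i$ is required for positivity of $\max_{j\in\mathcal{D}}(\Delta_a^j-\delta_j)$ is correct and is exactly the point the paper makes immediately after stating the theorem.
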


\begin{proof}
	We redefine $\Delta_{\min}^i$ as stated in the theorem. Then, 
	\begin{align}
		\Ex [ \Reg^i_{pf}(T) ] &= \Ex \left[ \sum_{t=1}^T \Delta_{a(t)}^i \right] \nonumber \\
		&\leq \Ex \left[ \sum_{t=1}^T \Delta_{a(t)}^i\In\{a(t)\in\mathcal{S}^i\} \right] \nonumber \\
		&\leq \sum_{a\in\mathcal{S}^i} \left( \rev{\left(\frac{\pi^2}{6}D + 1 \right)}\Delta_a^i+ \frac{36\Delta_a^i}{\rev{(\max_{j\in\mathcal{D}}(\Delta_a^j-\delta_j))^2}}\log\frac{17}{\max_{j\in\mathcal{D}}(\Delta_a^j-\delta_j)} \right) \label{eqn:NOM-LEX-supp}
	\end{align}
	where we prove \eqref{eqn:NOM-LEX-supp} by replacing $\mathcal{S}_*^i$ with $\mathcal{S}^i$ in the proof of Theorem \ref{thm:NOM-LEX}.
\end{proof}

\rev{Note that redefining $\Delta_{\min}^i$ as $\min_{a\in\mathcal{S}^i}\Delta_a^i$ in Case 2 implies that the learner has stronger prior knowledge on the near-lexicographic optimal expected rewards, since $\min_{a\in\mathcal{S}^i}\Delta_a^i \leq \min_{a\in\mathcal{S}^i_*}\Delta_a^i$.}

\subsection{Multiobjective MAB with Satisficing Objectives} \label{sec:satisficing}

In this section, we extend the satisfaction-in-mean-reward problem introduced in \cite{reverdy2017satisficing} to the multiobjective setting. We keep the same system model but introduce the concept of satisficing optimality and a new notion of regret that captures this concept.

\textbf{Satisficing Optimality:} In the satisficing setting, the learner is given a target threshold $\eta_i$ for each objective $i\in\mathcal{D}$. We say that an arm $a$ is satisficing ``optimal'' or simply satisficing in objective $i$ if and only if its mean reward in objective $i$ is equal to or larger than the corresponding target threshold. Let $\mathcal{A}_s^i:=\{a\in\mathcal{A}:\mu_a^i\geq\eta_i\}$ be the set of satisficing arms in objective $i$ and $\mathcal{S}_s^i:=\mathcal{A}-\mathcal{A}_s^i$ be the set of non-satisficing arms in objective $i$. The satisficing goal is to play arms that are satisficing in all objectives. We assume such arms exist and call them satisficing ``optimal'' arms. Then, we use $*$ to denote an arbitrary satisficing ``optimal'' arm and call it the ``optimal'' satisficing arm. Note that $\eta_i\leq\mu_*^i$ for all $i\in\mathcal{D}$ and define $\delta_i:=\mu_*^i-\eta_i$ for all $i\in\mathcal{D}$.

\textbf{Regret Definition:} The satisficing regret in objective $i$ is given as $\Reg_s^i(T):=\sum_{t=1}^T (\Delta_{a(t)}^i-\delta_i) \mathbb{I} \{ a(t)\in\mathcal{S}_s^i \}$ and the satisficing regret is defined as the tuple $\bm{\Reg}_s(T):=(\Reg_s^1(T),\allowbreak\ldots,\allowbreak\Reg_s^D(T))$. Note that an arm $a$ incurs regret in objective $i$ only when it is not satisficing in that objective and the amount of regret incurred is equal to the gap between its mean reward in objective $i$ and the corresponding target threshold, i.e., \reva{$\Delta_a^i-\delta_i = \eta_i - \mu_{a}^i$}.

\begin{remark}
    When $D=1$, the multiobjective formulation reduces to the exact same problem introduced in \cite{reverdy2017satisficing} as Problems 1 and 2 (satisfaction-in-mean-reward problem).
\end{remark}

Assuming $\eta_i=0$ for all $i\in\mathcal{D}$ without any loss of generality,\footnote{See Remark \ref{rmk:normalization} for a detailed discussion of why the generality is not lost.} the algorithm proposed for Case 2, which is NOM-LEX, can also be used to solve the satisficing goal in the multiobjective MAB. Since the goal now is to minimize the satisficing regret rather than the lexicographic regret, \reva{we no longer need $\eta_i$ to lie between the lexicographic optimal and the second highest lexicographic optimal expected rewards in objective $i$.} The following theorem bounds the expected satisficing regret for NOM-LEX.

\begin{theorem} \label{thm:satisficing}
    When NOM-LEX is run for the satisficing goal, $\forall i\in\mathcal{D}$ and $\forall T\geq 1$, we have
    \begin{align*}
        \Ex[\Reg_s^i(T)] \leq \sum_{a\in\mathcal{S}_s^i}\left( \left(\frac{\pi^2}{6}D+1\right)(\Delta_a^i-\delta_i) + \frac{36}{\max_{j\in\mathcal{D}}(\Delta_a^j-\delta_j)}\log\frac{17}{\max_{j\in\mathcal{D}}(\Delta_a^j-\delta_j)} \right) ~.
    \end{align*}
\end{theorem}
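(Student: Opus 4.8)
The plan is to follow the proof of Theorem~\ref{thm:NOM-LEX} almost verbatim, exploiting the fact that NOM-LEX's selection rule depends only on the sign of each sample mean relative to the normalized threshold $0$, and is therefore blind to whether the underlying goal is lexicographic or satisficing. First I would decompose the expected satisficing regret over non-satisficing arms as
\begin{align*}
\Ex[\Reg_s^i(T)] = \sum_{a\in\mathcal{S}_s^i}(\Delta_a^i-\delta_i)\,\Ex[N_a(T+1)] ~,
\end{align*}
where $N_a(T+1):=\sum_{t=1}^T\In\{a(t)=a\}$ counts the pulls of $a$. This reduces the claim to the same per-arm selection-count bound needed in Case~2, now weighted by the per-round satisficing cost $(\Delta_a^i-\delta_i)$ in place of $\Delta_a^i$.

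Next I would import the selection-count bound established for Theorem~\ref{thm:NOM-LEX}, which splits $\Ex[N_a(T+1)]$ into the single initial pull, the round-robin \emph{exploration} pulls triggered when $\hat{\mathcal{A}}_*(t)=\emptyset$, and the \emph{exploitation} pulls when $a\in\hat{\mathcal{A}}_*(t)$. For exploration, a satisficing ``optimal'' arm $*$ has $\mu_*^j\geq\eta_j=0$ for every $j$, so the event $\hat{\mu}_*^j(t)\leq-\sqrt{4\log N_*(t)/N_*(t)}$ is a lower-tail deviation of a $1$-sub-Gaussian average of probability at most $N_*(t)^{-2}$; summing over counts and over the $D$ objectives gives the $\tfrac{\pi^2}{6}D$ factor. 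For exploitation, write $g:=\max_{j\in\mathcal{D}}(\Delta_a^j-\delta_j)$ and let $j^\star$ attain it, so that $\mu_a^{j^\star}=-g<0$; membership in $\hat{\mathcal{A}}_*(t)$ forces the upper-tail event $\hat{\mu}_a^{j^\star}(t)>-\sqrt{4\log N_a(t)/N_a(t)}$, whose frequency is controlled exactly as in Theorem~\ref{thm:NOM-LEX}. Collecting the three pieces yields
\begin{align*}
\Ex[N_a(T+1)] \leq 1 + \frac{\pi^2}{6}D + \frac{36}{g^2}\log\frac{17}{g} ~.
\end{align*}

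The only genuinely new step is the final weighting. Multiplying this bound by $(\Delta_a^i-\delta_i)$ and applying the elementary inequality $\Delta_a^i-\delta_i\leq g$ to the exploitation term cancels one factor of $g$, so that $\tfrac{36(\Delta_a^i-\delta_i)}{g^2}\log\tfrac{17}{g}$ collapses to $\tfrac{36}{g}\log\tfrac{17}{g}$. Summing over $a\in\mathcal{S}_s^i$ then gives exactly the stated bound, and explains why the satisficing denominator carries a single power of the gap whereas the priority-based bound of Theorem~\ref{thm:NOM-LEX} keeps $\Delta_a^i$ over $g^2$.

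The main obstacle I anticipate is conceptual rather than computational: one must verify that the Case~2 selection-count argument does not secretly exploit the lexicographic structure of $\mathcal{S}_*^i$, namely that such arms tie the optimum in objectives $1,\ldots,i-1$. The resolution is that NOM-LEX eliminates an arm through whichever objective it is most non-satisficing in, so the argument uses only that $a$ has a strictly negative mean in some objective, i.e.\ that $g>0$. Since every $a\in\mathcal{S}_s^i$ satisfies $\Delta_a^i-\delta_i=-\mu_a^i>0$ and hence $g\geq\Delta_a^i-\delta_i>0$, the bound transfers unchanged.
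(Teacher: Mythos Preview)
Your proposal is correct and follows essentially the same approach as the paper: the paper's proof simply instructs the reader to rerun the proof of Theorem~\ref{thm:NOM-LEX} with $\mathcal{S}_*^i$ replaced by $\mathcal{S}_s^i$ and every $\Delta_a^i$ replaced by $\Delta_a^i-\delta_i$, and then applies the same elementary inequality $(\Delta_a^i-\delta_i)/g^2\le 1/g$ that you identify to collapse the exploitation term. Your explicit discussion of why the Case~2 argument does not rely on the lexicographic tie structure (only on $g>0$ and $\mu_*^j\ge 0$) is a useful elaboration, but the underlying three-part decomposition, the Hoeffding-based exploration bound, and the Fact~\ref{fact:garivier} exploitation bound are identical to the paper's route.
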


\begin{proof}
    Replacing lexicographic optimality with satisficing optimality, $\mathcal{S}_*^i$ with \reva{$\mathcal{S}_s^i$}, and every instance of the exact phrase $\Delta_a^i$ (and $\Delta_{a(t)}^i$) with $\Delta_a^i-\delta_i$ (and $\Delta_{a(t)}^i-\delta_i$), the proof of Theorem \ref{thm:NOM-LEX} holds for Theorem \ref{thm:satisficing}
     as well. Note that
    \begin{align*}
        \frac{\Delta_a^i-\delta_i}{(\max_{j\in\mathcal{D}}(\Delta_a^j-\delta_j))^2} \leq \frac{1}{\max_{j\in\mathcal{D}}(\Delta_a^j-\delta_j)}
    \end{align*}
    for all $i\in\mathcal{D}$.
\end{proof}

\begin{corollary} \label{crl:satisficing}
    When NOM-LEX is run for the single objective satisfaction-in-mean-reward problem ($D=1$), $\forall T\geq 1$, we have
    \begin{align*}
        \Ex[\Reg_s(T)] \leq \sum_{a\in\mathcal{S}_s}\left( \left(\frac{\pi^2}{6}D+1\right)(\Delta_a-\delta) + \frac{36}{\Delta_a-\delta}\log\frac{17}{\Delta_a-\delta} \right) ~.\footnotemark
    \end{align*}
    \footnotetext{For simplicity, the objective index 1 is omitted.}
\end{corollary}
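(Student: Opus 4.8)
The plan is to obtain this corollary as an immediate specialization of Theorem \ref{thm:satisficing} to the single-objective case $D=1$. First I would set $\mathcal{D}=\{1\}$ and drop the objective index throughout, writing $\Reg_s$ for $\Reg_s^1$, $\mathcal{S}_s$ for $\mathcal{S}_s^1$, $\Delta_a$ for $\Delta_a^1$, and $\delta$ for $\delta_1$, as licensed by the footnote attached to the statement. This is purely notational and requires no argument beyond matching symbols.

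Next I would observe that the only place where the multiobjective structure enters the bound of Theorem \ref{thm:satisficing} is through the maximization $\max_{j\in\mathcal{D}}(\Delta_a^j-\delta_j)$ over all objectives. When $D=1$ the index set $\mathcal{D}$ is the singleton $\{1\}$, so this maximum collapses to its single term $\Delta_a^1-\delta_1=\Delta_a-\delta$. Substituting this into the right-hand side of Theorem \ref{thm:satisficing} and leaving the coefficient $\frac{\pi^2}{6}D+1$ as is (now evaluated at $D=1$) reproduces the claimed bound term by term. No inequality is lost in the collapse, since a maximum over a singleton is an equality.

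The remaining point to confirm is that the single-objective satisficing instance genuinely falls under the hypotheses of Theorem \ref{thm:satisficing}, so that its conclusion may be invoked unchanged. This is already recorded in the preceding remark: for $D=1$ the multiobjective satisficing formulation coincides with the satisfaction-in-mean-reward problem of \cite{reverdy2017satisficing}, and NOM-LEX is run with $\eta=0$ without loss of generality by Remark \ref{rmk:normalization}. Consequently the theorem applies verbatim. I do not expect a genuine obstacle here—all the analytical content is carried by Theorem \ref{thm:satisficing}, and the corollary merely records its $D=1$ consequence; the only thing worth double-checking is that the singleton maximum collapses exactly as claimed and that the regret and suboptimality-gap notation aligns with that of \cite{reverdy2017satisficing} for the intended comparison.
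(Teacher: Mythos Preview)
Your proposal is correct and matches the paper's approach: the corollary is stated without a separate proof, as it is an immediate specialization of Theorem~\ref{thm:satisficing} to $D=1$, obtained by collapsing $\max_{j\in\mathcal{D}}(\Delta_a^j-\delta_j)$ to $\Delta_a-\delta$ and dropping the objective index.
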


\begin{remark}
    Theorem \ref{thm:satisficing} and Corollary \ref{crl:satisficing} show that bounded regret is possible for satisfaction-in-mean-reward problem whether it is single objective or multiobjective, when the learner is given the target thresholds. This result is directly in conflict with Corollary 2 of \cite{reverdy2017satisficing}, which claims a logarithmic lower bound on the single objective case, and suggests that satisfaction-in-mean-reward UCL algorithm given in Section VI-A of \cite{reverdy2017satisficing} is not optimal since it fails to achieve bounded regret.
\end{remark}

\subsection{Additional Experiments}
Figure~\ref{fig:general} shows the regrets of OM-LEX 1, NOM-LEX 1, PF-LEX 1 in Setting 1. We observe that the regret of OM-LEX in objective 1 is significantly smaller than the regret of NOM-LEX. We believe this is the case because OM-LEX is able to take advantage of the large absolute gap of arm 3 to eliminate it early on, whereas NOM-LEX cannot. The behavior of PF-LEX is explained as follows. Until around round 30000, it explores all three arms uniformly since their estimated rewards in objective 1 are still chained to each other. At round 30000, the gap between the arms is deemed small enough with respect to the time horizon of the problem.
In the remaining rounds, it plays only the optimistic near-lexicographic optimal arm in objective $2$ ($\hat{a}^2_*(t)$). 
For this case, $\epsilon$ matches with the minimum suboptimality gap. Thus, although PF-LEX always chooses $\hat{a}^2_*(t)$, because $\hat{{\cal A}}^1_*(t) = {\cal A}^1_*$, it learns to play optimally. 
\rev{As a remark, we note that PF-LEX could incur high regret in objective $1$ (see PF-LEX 2 in Table \ref{tbl:results}) if the minimum suboptimality gap were smaller than $\epsilon$. }

\begin{figure}
	\centering
	\includegraphics[width=.75\linewidth]{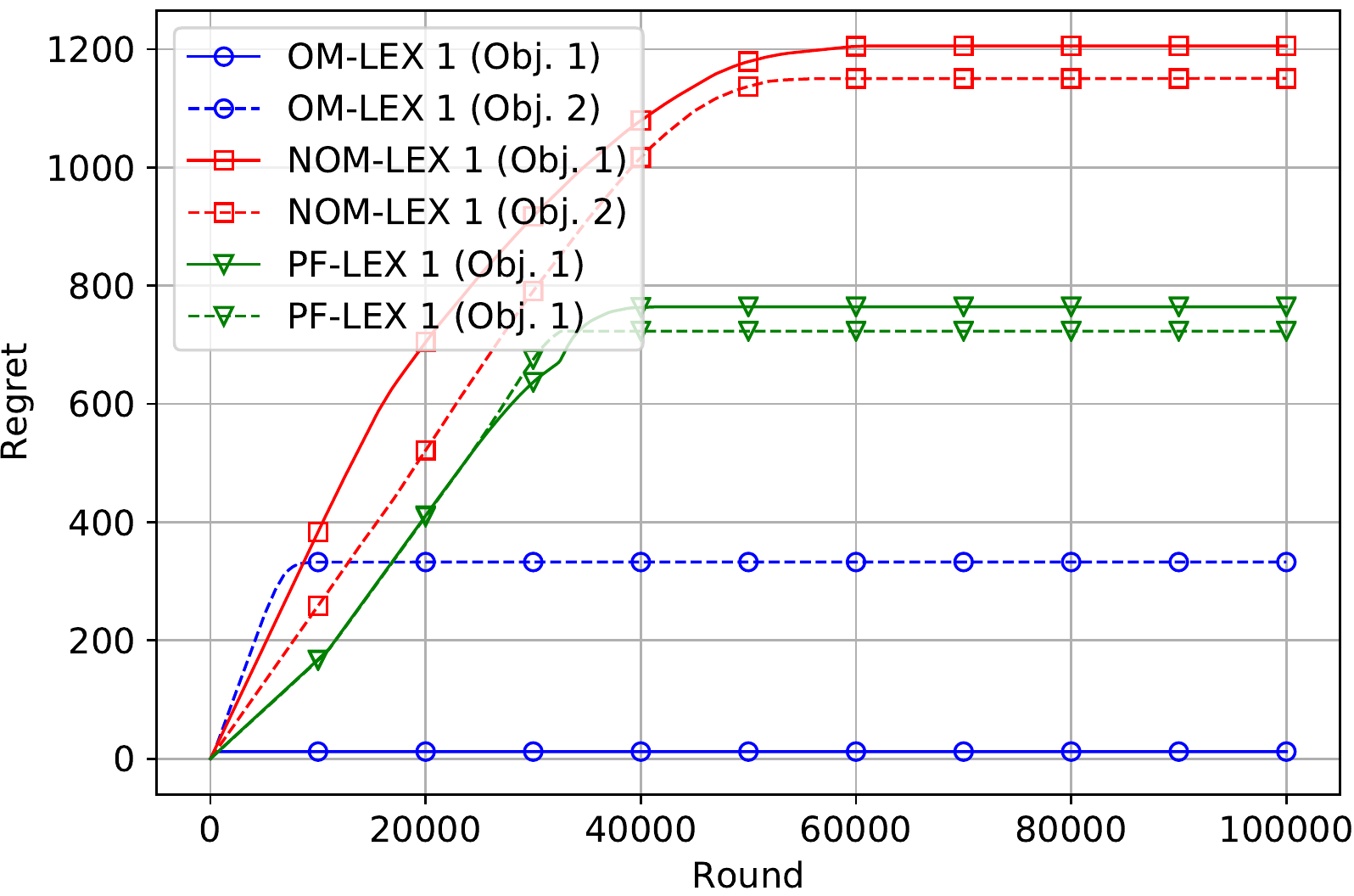}
	\caption{Regrets of OM-LEX 1, NOM-LEX 1, and PF-LEX 1 in Setting 1.}
	\label{fig:general}
\end{figure}

Next, for Setting 1, we consider PF-LEX 3 that has parameters $\epsilon=\delta=T^{-1/3}$ as given in Table~\ref{tbl:learners2} (that match with the optimal choice for $\epsilon$ given in Theorem \ref{theorem:gapindependent}), run simulations for $T=5\times10^8$, and report the average regret of the learner over 5 runs (Figure~\ref{fig:general-case3}). This result illustrates the identifiability problem introduced earlier that makes learning lexicographic optimal arms particularly challenging. We see that PF-LEX rules out arm 3 as a potential lexicographic optimal arm and stops incuring regret in objective 1 very early on. However, since it is not possible to be confident in that both arm 1 and arm 2 have equal expected rewards in objective 1, the algorithm still keeps exploring them uniformly until around round $2\times 10^8$. During this exploration stage, it incurs linear regret in objective 2. Once both arms are deemed to be optimal in objective 1, PF-LEX starts exploiting the optimistic near-lexicographic optimal arm in objective 2, after which the increase of the regret in objective 2 drops drastically.

\begin{figure}
	\centering
	\includegraphics[width=.75\linewidth]{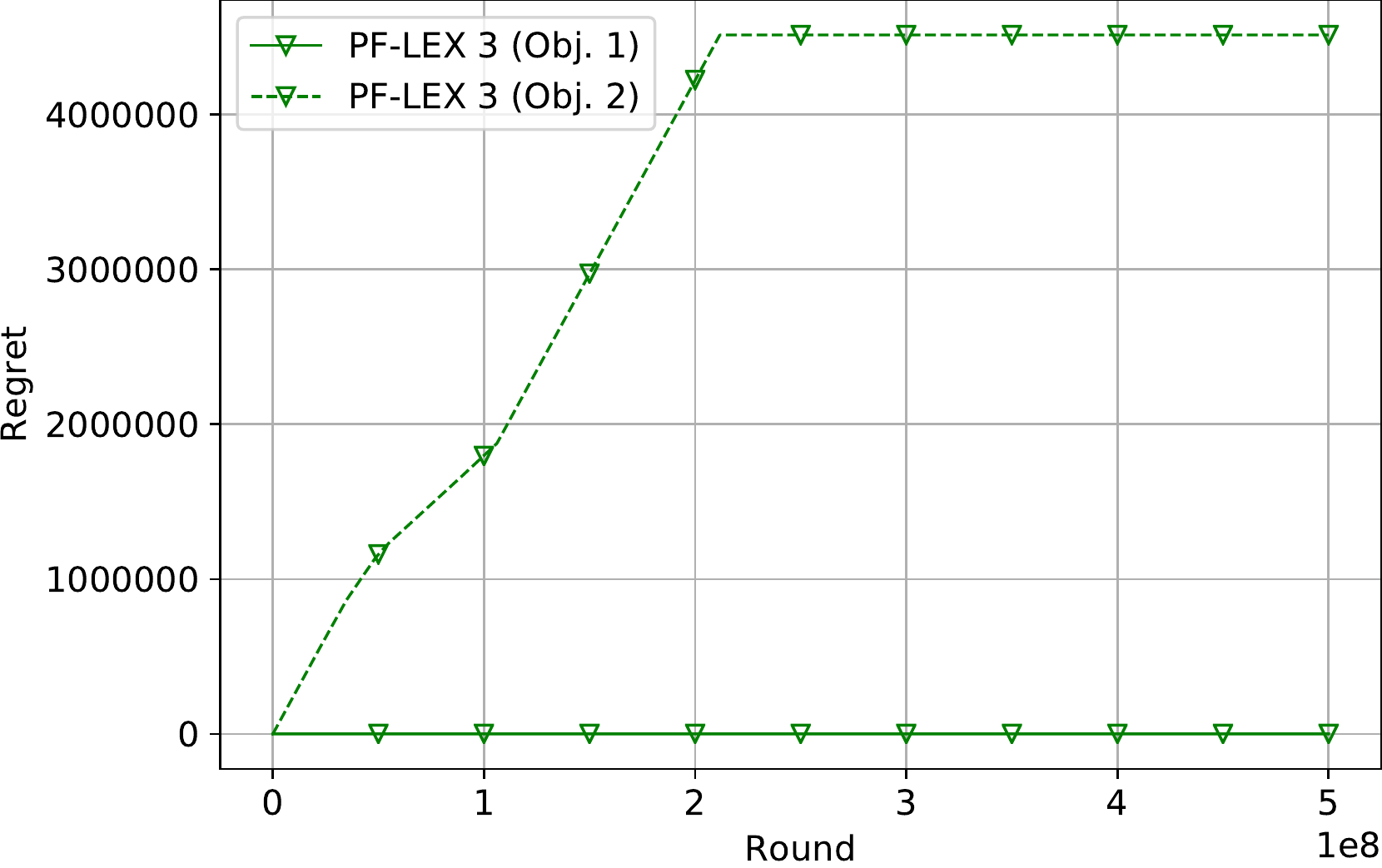}
	\caption{Regret of PF-LEX 3 in Setting 1.}
	\label{fig:general-case3}
\end{figure}

We also present experiments in two additional settings with $D=3$. In Setting 4, there are 43 arms with expected reward vectors in $\{0.90,0.50,0.40,0.10\}^3$ such that each arm has a \red{unique expected reward vector}, where we eliminated arms that lexicographically dominate $(0.50,0.50,0.50)$ so that it is the only lexicographic optimal arm and $\Delta_{\min}^1=\Delta_{\min}^2=\Delta_{\min}^3=0.10$. Setting 4 features a large variety of arms with combinations of expected rewards that are much higher than, equal to, slightly lower than, and much lower than the lexicographic optimal expected rewards in all objectives. 
In Setting 5, there are 19 arms, where we eliminated arms in $\mathcal{S}_*^2$ from Setting 4 so that $\mathcal{S}_*^2=\emptyset$ and $\Delta_{\min}^2=\infty$ while $\Delta_{\min}^1=\Delta_{\min}^3=0.10$.
Table \ref{tbl:learners2} shows OMG-LEX and NOM-LEX with different prior knowledge and parameters than the ones considered in Section \ref{sec:experiments}.

\begin{table}
	\caption{Prior knowledge and parameters of the algorithms for additional experiments.}
	\label{tbl:learners2}
	\smallskip
	\centering
	\begin{tabular}{ll}
		\toprule
		\textbf{Algorithm} & \textbf{Prior Knowledge \& Parameters} \\
		\midrule
		OM-LEX 2 & $\mu_*^1=\mu_*^2=\mu_*^3=0.50$ \\
		NOM-LEX 4 & $\eta_1=\eta_2=\eta_3=0.45$ \\
		NOM-LEX 5 & $\eta_1=\eta_3=0.45$, $\eta_2=-10^6$ \\
		PF-LEX 3 & $\epsilon=\delta=T^{-1/3}$ \\
		\bottomrule
	\end{tabular}
\end{table}

We run simulations with $T=10^5$ and average the regret of the learners over 100 individual runs. Different from simulations in Section \ref{sec:experiments}, we give results for priority-free regrets as well.

Table~\ref{tbl:results2} shows the priority-based and priority-free regrets of OM-LEX 2, NOM-LEX 4 and 5 in Settings 4 and 5 at $T=10^5$. Since NOM-LEX considers arms with very high expected rewards compared to the near optimal expected reward as potential optimal arms, it tends to incur a lot more negative regret in priority-free settings as opposed to OM-LEX, which only looks for arms with expected rewards that are very close to the lexicographic optimal expected rewards.

\rev{In Setting 5,\footnote{Note that the prior information of NOM-LEX 5 is not valid for Setting 4.} note that any $\delta_2>0$ would guarantee a bounded regret for NOM-LEX (since $\Delta_{\min}^2=\infty$).} Moreover, $\delta_2$ appears in none of our regret bounds in Theorems \ref{thm:NOM-LEX} and \ref{thm:NOM-LEX-supp} for Setting 5. However, our numerical experiments show that it still affects the regret. This is because, knowing a larger $\delta_2$ better captures the information $\Delta_{\min}^2=\infty$ and results in having smaller regret in objective 1. 

\begin{table}
    \caption{Priority-based and priority-free regrets of OM-LEX 2, NOM-LEX 4 and 5 in Settings 4 and 5.}
    \label{tbl:results2}
    \smallskip
    \centering
    \setlength{\tabcolsep}{2.3pt}
    \begin{tabular}{lr@{\,}c@{\,}lr@{\,}c@{\,}lr@{\,}c@{\,}lr@{\,}c@{\,}lr@{\,}c@{\,}lr@{\,}c@{\,}l}
        \toprule
        & \multicolumn{9}{c}{\textbf{Setting 4}} & \multicolumn{9}{c}{\textbf{Setting 5}} \\
        \cmidrule(lr){2-10} \cmidrule(lr){11-19}
        \textbf{Learners} & \multicolumn{3}{c}{\textbf{Obj. 1}} & \multicolumn{3}{c}{\textbf{Obj. 2}} & \multicolumn{3}{c}{\textbf{Obj. 3}} & \multicolumn{3}{c}{\textbf{Obj. 1}} & \multicolumn{3}{c}{\textbf{Obj. 2}} & \multicolumn{3}{c}{\textbf{Obj. 3}} \\
		\midrule
		\multicolumn{19}{l}{OM-LEX 2} \\[1.3pt]
		\textit{pr.-based} & 2000&$\pm$&100 & 821&$\pm$&75 & 367&$\pm$&59 & 1010&$\pm$&82 & && & 373&$\pm$&72 \\
		\textit{pr.-free} & 1990&$\pm$&120 & 1440&$\pm$&110 & 1290&$\pm$&110 & 1040&$\pm$&81 & -350&$\pm$&17 & -350&$\pm$&17 \\
		\addlinespace
		\multicolumn{19}{l}{NOM-LEX 4} \\[1.3pt]
        \textit{pr.-based} & 6620&$\pm$&2000 & 2160&$\pm$&800 & 684&$\pm$&420 & 7180&$\pm$&2000 & && & 1160 &$\pm$& 470 \\
		\textit{pr.-free} & 6840&$\pm$&1800 & -4490&$\pm$&3000 & -7910&$\pm$&3200 & 7250&$\pm$&2200 & -14100&$\pm$&4300 & -5930&$\pm$&2500 \\
		\addlinespace
		\multicolumn{19}{l}{NOM-LEX 5} \\[1.3pt]
        \textit{pr.-based} & && & && & && & 6570&$\pm$&2700 & && & 1020&$\pm$&550 \\
        \textit{pr.-free} & && & && & && & 6670&$\pm$&2600 & -12900&$\pm$&5200 & -5860&$\pm$&2800 \\
        \bottomrule
    \end{tabular}
\end{table}

\subsection{Proof of Theorem \ref{thm:OM-LEX}}

We use the following fact to prove both Theorem \ref{thm:OM-LEX} and \ref{thm:NOM-LEX}.
\begin{fact}[Results from the proof of Theorem 9 in \citet{garivier}] \label{fact:garivier}
    Given $\Delta>0$, for all arms $a\in\mathcal{A}$, for all objectives $i\in\mathcal{D}$ and for all rounds $t\in\{1,2,\ldots,T\}$, we have
    \begin{align*}
        &\sum_{w=1}^{\infty} \Pr\left( \hat{\mu}_a^i(t)-\mu_a^i>\Delta-\sqrt{\frac{4\log N_a(t)}{N_a(t)}} \;\middle|\; N_a(t)=w  \right)  \\
        &\quad = \sum_{w=1}^{\infty} \Pr\left( \hat{\mu}_a^i(t)-\mu_a^i<\sqrt{\frac{4\log N_a(t)}{N_a(t)}}-\Delta \;\middle|\; N_a(t)=w \right) \\
        &\quad \leq \frac{36}{\Delta^2}\log\frac{17}{\Delta} ~.
    \end{align*}
\end{fact}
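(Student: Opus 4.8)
The plan is to reduce the probabilistic claim to a deterministic series estimate and then bound that series, following the computation behind Theorem 9 of \cite{garivier}. Fix $\Delta>0$, an arm $a$, an objective $i$, and a round $t$, and abbreviate the confidence radius as $c_w := \sqrt{4\log w/w}$. Conditioned on $N_a(t)=w$, the estimate $\hat\mu_a^i(t)$ is an average of $w$ i.i.d.\ draws of the $1$-sub-Gaussian noise added to $\mu_a^i$, so $\hat\mu_a^i(t)-\mu_a^i$ is $1/w$-sub-Gaussian and the two-sided sub-Gaussian tail bound gives
\begin{align*}
\Pr\Big(\hat\mu_a^i(t)-\mu_a^i>\Delta-c_w \,\Big|\, N_a(t)=w\Big)\leq \exp\Big(-\tfrac{w}{2}(\Delta-c_w)^2\Big)
\end{align*}
whenever $\Delta>c_w$, with the identical bound for the lower-tail event $\{\hat\mu_a^i(t)-\mu_a^i<c_w-\Delta\}$; this common bound is what produces the stated equality of the two sums (an exact equality under symmetric noise as in Garivier's Gaussian model, and in general both sums share it). Hence it suffices to bound $\sum_{w:\,c_w<\Delta}\exp(-\tfrac{w}{2}(\Delta-c_w)^2)$ plus the number of indices $w$ with $c_w\geq\Delta$, each of the latter contributing at most $1$.

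I would estimate this by splitting on the size of $c_w$. Since $c_w\to 0$, the set $\{w:c_w>\Delta/2\}$ is finite with cardinality at most $w_0$, where $w_0$ solves $c_{w_0}=\Delta/2$, i.e.\ $16\log w_0/\Delta^2=w_0$, so $w_0=O(\tfrac{1}{\Delta^2}\log\tfrac{1}{\Delta})$; bounding each corresponding summand by $1$ contributes $O(\tfrac{1}{\Delta^2}\log\tfrac{1}{\Delta})$. For the remaining indices, $c_w\leq\Delta/2$ yields $(\Delta-c_w)^2\geq\Delta^2/4$, so each summand is at most $\exp(-w\Delta^2/8)$, and the geometric series $\sum_{w\geq 1}\exp(-w\Delta^2/8)=\exp(-\Delta^2/8)/(1-\exp(-\Delta^2/8))$ contributes $O(1/\Delta^2)$. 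Adding the two pieces gives a bound of the form $\frac{C}{\Delta^2}\log\frac{C'}{\Delta}$ for absolute constants $C,C'$.

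The real work lies entirely in pinning down the constants so that the sum is at most $\frac{36}{\Delta^2}\log\frac{17}{\Delta}$ rather than merely $O(\tfrac1{\Delta^2}\log\tfrac1\Delta)$. This requires choosing the split point carefully (and, near the threshold, replacing the crude estimate $(\Delta-c_w)^2\geq\Delta^2/4$ by a sharper one), solving $c_{w_0}=\Delta/2$ explicitly, and verifying the resulting inequality across the whole range of $\Delta$---in particular distinguishing small $\Delta$, where $\log(17/\Delta)$ is large and dominates, from moderate $\Delta$, where the logarithm is near its minimum and one must ensure the constant $17$ keeps the argument above $1$. This is precisely the constant-tracking carried out in the proof of Theorem 9 of \cite{garivier}, which we invoke for Fact \ref{fact:garivier}. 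A secondary but genuine technical point is the conditioning step in the first paragraph: to make ``conditioned on $N_a(t)=w$'' rigorous, one works with the i.i.d.\ reward stream of arm $a$ indexed by its own sample count rather than by the global round, exactly as in that reference.
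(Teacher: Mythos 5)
Your proposal is sound, but note that the paper itself never proves this statement: Fact~\ref{fact:garivier} is imported as a black box from the proof of Theorem~9 in \cite{garivier}, exactly as its bracketed attribution indicates, and is then applied verbatim in the proofs of Theorems~\ref{thm:OM-LEX} and~\ref{thm:NOM-LEX}. Your sketch therefore supplies strictly more than the paper does, and it reproduces the right mechanism: after re-indexing by sample count, the deviation of the empirical mean after $w$ pulls is $1/w$-sub-Gaussian; the indices with $c_w>\Delta/2$ number $O(\Delta^{-2}\log(1/\Delta))$ and contribute at most $1$ each; and the remaining indices give a geometric series of size $O(\Delta^{-2})$, yielding a bound of the form $\frac{C}{\Delta^2}\log\frac{C'}{\Delta}$. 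Pinning down $C=36$ and $C'=17$ is precisely the constant-tracking in \cite{garivier}, which you, like the paper, invoke rather than redo --- a legitimate move given the Fact is explicitly attributed. Two of your side remarks deserve emphasis because they quietly repair the statement as written: (i) conditioning on $N_a(t)=w$ is not innocuous, since $N_a(t)$ depends on past rewards; the paper's own applications sidestep this exactly as you propose, by working with $\hat{\mu}_a^{\dagger}(\tau_a(w+1))$, the mean after exactly $w$ i.i.d.\ pulls; (ii) the asserted equality of the two sums holds literally only for symmetric noise, and nothing more than the common upper bound is ever used --- the proof of Theorem~\ref{thm:OM-LEX} invokes the lower-tail form in \eqref{eqn:case1-decomp2a} and the upper-tail form in \eqref{eqn:case1-decomp2b}. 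Finally, the regime issue you gesture at could be stated outright: for $\Delta\geq 17$ the right-hand side is nonpositive while the left-hand side is positive (e.g., under Gaussian noise), so the bound is meaningful only for $\Delta$ bounded away from that threshold; this is harmless in the intended setting of small gaps, where $\log(17/\Delta)$ stays above $\log 17$ for $\Delta\leq 1$, but it is a caveat inherited by the paper's statement as well.
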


For an arm $a$ that is not lexicographic optimal, let $\dagger(a):=\argmax_{j\in\mathcal{D}}\nabla_a^j$ so that $\nabla_a^{\dagger(a)}=\nabla_a^{\max}$. When $a$ can be inferred from the context, $\dagger(a)$ is denoted by $\dagger$ only. For all objectives $i\in\mathcal{D}$, we decompose $\Ex[\Reg^i(T)]$ as
\begin{align}
    \Ex[\Reg^i(T)] &= \Ex\left[ \sum_{t=1}^T \In\{a(t)\in\mathcal{S}_*^i\} \Delta_{a(t)}^i \right] \nonumber \\
    &= \Ex\left[ \sum_{a\in\mathcal{S}_*^i} \sum_{t=1}^T \In\{a(t)=a\} \Delta_a^i \right] \nonumber \\
    &= \Ex\left[ \sum_{a\in\mathcal{S}_*^i} \sum_{t=1}^T \In\{t\leq A, a(t)=a\} \Delta_a^i \right] \label{eqn:case1-decomp1} \\
    &\quad + \Ex\left[ \sum_{a\in\mathcal{S}_*^i} \sum_{t=1}^T \In\left\{t>A, |\hat{\mu}_a^\dagger(t)|<\sqrt{\frac{4\log N_a(t)}{N_a(t)}}, a(t)=a\right\} \Delta_a^i \right] \label{eqn:case1-decomp2} \\
    &\quad + \Ex\left[ \sum_{a\in\mathcal{S}_*^i} \sum_{t=1}^T \In\left\{t>A, |\hat{\mu}_a^\dagger(t)|\geq\sqrt{\frac{4\log N_a(t)}{N_a(t)}}, a(t)=a\right\} \Delta_a^i \right] \label{eqn:case1-decomp3} ~.
\end{align}

Bounding \eqref{eqn:case1-decomp1} is trivial, since each arm is played exactly once for rounds $t\leq A$. We have
\begin{align*}
    \Ex\left[ \sum_{a\in\mathcal{S}_*^i} \sum_{t=1}^T \In\{t\leq A, a(t)=a\} \Delta_a^i \right] &= \Ex\left[ \sum_{a\in\mathcal{S}_*^i} \sum_{t=1}^T \In\{t=a\} \Delta_a^i \right] \\
    &= \sum_{a\in\mathcal{S}_*^i} \Ex\left[ \sum_{t=1}^T \In\{t=a\} \right] \Delta_a^i \\
    &= \sum_{a\in\mathcal{S}_*^i} \Delta_a^i ~.
\end{align*}

In order to bound \eqref{eqn:case1-decomp2}, we define $\tau_a(w)$ as the $w$th round for which $a(t)=a$, and $w_a(T)$ as the number of rounds for which $a(t)=a$ by round $T$. By definition, $N_a(\tau_a(w+1))=w$, \rev{$\tau_a(1)\leq A$} and $w_a(T)\leq T$ hold for all arms $a$. Thus, we have
\begin{align}
    &\Ex\left[ \sum_{a\in\mathcal{S}_*^i} \sum_{t=1}^T \In\left\{t>A, |\hat{\mu}_a^\dagger(t)|<\sqrt{\frac{4\log N_a(t)}{N_a(t)}}, a(t)=a\right\} \Delta_a^i \right] \nonumber \\
    &\quad \rev{=} \sum_{a\in\mathcal{S}_*^i} \Ex\left[ \sum_{w=1}^{\rev{w_a(T)-1}} \sum_{t=\tau_a(w)+1}^{\tau_a(w+1)} \In\left\{ |\hat{\mu}_a^\dagger(t)|<\sqrt{\frac{4\log N_a(t)}{N_a(t)}}, a(t)=a\right\}  \right] \Delta_a^i \nonumber \\
    &\quad = \sum_{a\in\mathcal{S}_*^i} \Ex\left[ \sum_{w=1}^{\rev{w_a(T)-1}} \In\left\{ |\hat{\mu}_a^\dagger(\tau_a(w+1))|<\sqrt{\frac{4\log w}{w}} \right\}  \right] \Delta_a^i \nonumber \\
    &\quad \leq \sum_{a\in\mathcal{S}_*^i} \sum_{w=1}^{\infty} \Pr\left( |\hat{\mu}_a^\dagger(\tau_a(w+1))|<\sqrt{\frac{4\log w}{w}} \right) \Delta_a^i \label{eqn:case1-decomp2c} ~.
\end{align}

When $\mu_a^\dagger=\nabla_a^\dagger$, we have
\begin{align}
    \sum_{w=1}^{\infty} \Pr\left( |\hat{\mu}_a^\dagger(\tau_a(w+1))|<\sqrt{\frac{4\log w}{w}} \right) &\leq \sum_{w=1}^{\infty} \Pr\left( \hat{\mu}_a^\dagger(\tau_a(w+1))<\sqrt{\frac{4\log w}{w}} \right) \nonumber \\
    &\leq \sum_{w=1}^{\infty} \Pr\left( \hat{\mu}_a^\dagger(\tau_a(w+1))-\mu_a^\dagger<\sqrt{\frac{4\log w}{w}}-\nabla_a^\dagger \right) \nonumber \\
    &\leq \frac{36}{(\nabla_a^\dagger)^2}\log\frac{17}{\nabla_a^\dagger} \label{eqn:case1-decomp2a} ~,
\end{align}
where \eqref{eqn:case1-decomp2a} is due to Fact \ref{fact:garivier}.

Similarly, when $\mu_a^\dagger=-\nabla_a^\dagger$, we have
\begin{align}
    \sum_{w=1}^{\infty} \Pr\left( |\hat{\mu}_a^\dagger(\tau_a(w+1))|<\sqrt{\frac{4\log w}{w}} \right) &\leq \sum_{w=1}^{\infty} \Pr\left( \hat{\mu}_a^\dagger(\tau_a(w+1))>-\sqrt{\frac{4\log w}{w}} \right) \nonumber \\
    &\leq \sum_{w=1}^{\infty} \Pr\left( \hat{\mu}_a^\dagger(\tau_a(w+1))-\mu_a^\dagger>\nabla_a^\dagger-\sqrt{\frac{4\log w}{w}} \right) \nonumber \\
    &\leq \frac{36}{(\nabla_a^\dagger)^2}\log\frac{17}{\nabla_a^\dagger} \label{eqn:case1-decomp2b} ~,
\end{align}
where \eqref{eqn:case1-decomp2b} is again due to Fact \ref{fact:garivier}.

Combining \eqref{eqn:case1-decomp2a} and \eqref{eqn:case1-decomp2b}, we obtain
\begin{align*}
    \eqref{eqn:case1-decomp2c} &\leq \sum_{a\in\mathcal{S}_*^i} \left(\frac{36}{(\nabla_a^\dagger)^2}\log\frac{17}{\nabla_a^\dagger}\right) \Delta_a^i \\
    &\leq \sum_{a\in\mathcal{S}_*^i} \frac{36}{\nabla_a^\dagger}\log\frac{17}{\nabla_a^\dagger} ~.
\end{align*}

In order to bound \eqref{eqn:case1-decomp3}, we observe that $t>A \wedge |\hat{\mu}_a^\dagger(t)|\geq\sqrt{4\log N_a(t)/N_a(t)} \wedge a(t)=a$ can only occur during an exploration stage, where each arm is played successively. Hence, we can infer that
\begin{enumerate}[label=(\roman*)]
    \item $a(t-a+*)=*$,
    \item $t-a+1>A$,
    \item $\hat{\mathcal{A}}_*(t-a+1)=\emptyset$, which implies that there exists an objective $j$ such that
    \begin{align*}
        |\hat{\mu}_*^j(t-a+*)| = |\hat{\mu}_*^j(t-a+1)| \geq \sqrt{4\frac{\log N_*(t-a+1)}{N_*(t-a+1)}} = \sqrt{4\frac{\log N_*(t-a+*)}{N_*(t-a+*)}} ~,
    \end{align*}
    since arm $*$ is not played after round $t-a+1$ until round $t-a+*$.
\end{enumerate}

Using these observations and defining $t_a:=t-a+*$, we obtain
\begin{align}
    &\Ex\left[ \sum_{a\in\mathcal{S}_*^i} \sum_{t=1}^T \In\left\{t>A, |\hat{\mu}_a^\dagger(t)|\geq\sqrt{\frac{4\log N_a(t)}{N_a(t)}}, a(t)=a\right\} \Delta_a^i \right] \nonumber \\
    &\quad = \Ex\left[ \sum_{a\in\mathcal{S}_*^i} \sum_{t=1}^T \In\left\{\exists j: t_a>A, |\hat{\mu}_*^j(t_a)|\geq\sqrt{\frac{4\log N_*(t_a)}{N_*(t_a)}}, a(t_a)=*\right\} \Delta_a^i \right] \nonumber \\
    &\quad \leq \sum_{a\in\mathcal{S}_*^i} \sum_{j=1}^D \Ex\left[ \sum_{t=1}^T \In\left\{t_a>A, |\hat{\mu}_*^j(t_a)|\geq\sqrt{\frac{4\log N_*(t_a)}{N_*(t_a)}}, a(t_a)=*\right\} \Delta_a^i \right] \nonumber \\
    &\quad \leq \sum_{a\in\mathcal{S}_*^i} \sum_{j=1}^D \Ex\left[ \sum_{t=1}^T \In\left\{t>A, |\hat{\mu}_*^j(t)|\geq\sqrt{\frac{4\log N_*(t)}{N_*(t)}}, a(t)=*\right\} \Delta_a^i \right] \nonumber \\
    &\quad \rev{=} \sum_{a\in\mathcal{S}_*^i} \sum_{j=1}^D \Ex\left[ \sum_{w=1}^{\rev{w_*(T)-1}} \sum_{t=\tau_*(w)+1}^{\tau_*(w+1)} \In\left\{ |\hat{\mu}_*^j(t)|\geq\sqrt{\frac{4\log N_*(t)}{N_*(t)}}, a(t)=*\right\}  \right] \Delta_a^i \nonumber \\
    &\quad = \sum_{a\in\mathcal{S}_*^i} \sum_{j=1}^D \Ex\left[ \sum_{w=1}^{\rev{w_*(T)-1}} \In\left\{ |\hat{\mu}_*^j(\tau_*(w+1))|\geq\sqrt{\frac{4\log w}{w}} \right\}  \right] \Delta_a^i \nonumber \\
    &\quad \leq \sum_{a\in\mathcal{S}_*^i} \sum_{j=1}^D \sum_{w=1}^{\infty} \Pr\left( |\hat{\mu}_*^j(\tau_*(w+1))|\geq\sqrt{\frac{4\log w}{w}} \right) \Delta_a^i \nonumber \\
    &\quad \rev{=} \sum_{a\in\mathcal{S}_*^i} \sum_{j=1}^D \sum_{w=1}^{\infty}\left[ \Pr\left( \hat{\mu}_*^j(\tau_*(w+1))\leq-\sqrt{\frac{4\log w}{w}} \right) + \Pr\left( \hat{\mu}_*^j(\tau_*(w+1))\geq\sqrt{\frac{4\log w}{w}} \right) \right] \Delta_a^i \nonumber \\
    &\quad \leq \sum_{a\in\mathcal{S}_*^i} \sum_{j=1}^D \sum_{w=1}^{\infty} \frac{2}{w^2} \Delta_a^i \label{eqn:case1-decomp3a} \\
    &\quad \leq \sum_{a\in\mathcal{S}_*^i} \rev{\left( \frac{\pi^2}{3} \right)} D\Delta_a^i \nonumber ~,
\end{align}
where \eqref{eqn:case1-decomp3a} is due to Hoeffding's \rev{inequality for sub-Gaussian random variables \cite{bubeck2013bounded}}.

\subsection{Proof of Theorem \ref{thm:NOM-LEX}}
For an arm $a$ that is not lexicographic optimal, let $\dagger(a):=\argmax_{j\in\mathcal{D}}\Delta_a^j-\delta_j$. When $a$ can be inferred from the context, $\dagger(a)$ is denoted by $\dagger$ only. For all objectives $i\in\mathcal{D}$, we decompose $\Ex[\Reg^i(T)]$ as
\begin{align}
    \Ex[\Reg^i(T)] &= \Ex\left[ \sum_{t=1}^T \In\{a(t)\in\mathcal{S}_*^i\} \Delta_{a(t)}^i \right] \nonumber \\
    &= \Ex\left[ \sum_{a\in\mathcal{S}_*^i} \sum_{t=1}^T \In\{a(t)=a\} \Delta_a^i \right] \nonumber \\
    &= \Ex\left[ \sum_{a\in\mathcal{S}_*^i} \sum_{t=1}^T \In\{t\leq A, a(t)=a\} \Delta_a^i \right] \label{eqn:case2-decomp1} \\
    &\quad + \Ex\left[ \sum_{a\in\mathcal{S}_*^i} \sum_{t=1}^T \In\left\{t>A, \hat{\mu}_a^\dagger(t)>-\sqrt{\frac{4\log N_a(t)}{N_a(t)}}, a(t)=a\right\} \Delta_a^i \right] \label{eqn:case2-decomp2} \\
    &\quad + \Ex\left[ \sum_{a\in\mathcal{S}_*^i} \sum_{t=1}^T \In\left\{t>A, \hat{\mu}_a^\dagger(t)\leq-\sqrt{\frac{4\log N_a(t)}{N_a(t)}}, a(t)=a\right\} \Delta_a^i \right] \label{eqn:case2-decomp3} ~.
\end{align}

Bounding \eqref{eqn:case2-decomp1} is trivial, since each arm is played exactly once for rounds $t\leq A$. We have
\begin{align*}
    \Ex\left[ \sum_{a\in\mathcal{S}_*^i} \sum_{t=1}^T \In\{t\leq A, a(t)=a\} \Delta_a^i \right] &= \Ex\left[ \sum_{a\in\mathcal{S}_*^i} \sum_{t=1}^T \In\{t=a\} \Delta_a^i \right] \\
    &= \sum_{a\in\mathcal{S}_*^i} \Ex\left[ \sum_{t=1}^T \In\{t=a\} \right] \Delta_a^i \\
    &= \sum_{a\in\mathcal{S}_*^i} \Delta_a^i ~.
\end{align*}

In order to bound \rev{\eqref{eqn:case2-decomp2}}, we use $\tau_a(w)$ and $w_a(T)$ defined in the proof of Theorem~\ref{thm:OM-LEX}. We have
\begin{align}
    &\Ex\left[ \sum_{a\in\mathcal{S}_*^i} \sum_{t=1}^T \In\left\{t>A, \hat{\mu}_a^\dagger(t)>-\sqrt{\frac{4\log N_a(t)}{N_a(t)}}, a(t)=a\right\} \Delta_a^i \right] \nonumber \\
    &\quad \rev{=} \sum_{a\in\mathcal{S}_*^i} \Ex\left[ \sum_{w=1}^{\rev{w_a(T)-1}} \sum_{t=\tau_a(w)+1}^{\tau_a(w+1)} \In\left\{ \hat{\mu}_a^\dagger(t)>-\sqrt{\frac{4\log N_a(t)}{N_a(t)}}, a(t)=a\right\}  \right] \Delta_a^i \nonumber \\
    &\quad = \sum_{a\in\mathcal{S}_*^i} \Ex\left[ \sum_{w=1}^{\rev{w_a(T)-1}} \In\left\{ \hat{\mu}_a^\dagger(\tau_a(w+1))>-\sqrt{\frac{4\log w}{w}} \right\}  \right] \Delta_a^i \nonumber \\
    &\quad \leq \sum_{a\in\mathcal{S}_*^i} \sum_{w=1}^{\infty} \Pr\left( \hat{\mu}_a^\dagger(\tau_a(w+1))>-\sqrt{\frac{4\log w}{w}} \right) \Delta_a^i \nonumber \\
    &\quad \leq \sum_{a\in\mathcal{S}_*^i} \sum_{w=1}^{\infty} \Pr\left( \hat{\mu}_a^\dagger(\tau_a(w+1))-\mu_a^\dagger>\Delta_a^\dagger-\delta_{\dagger(a)}-\sqrt{\frac{4\log w}{w}} \right) \Delta_a^i \nonumber \\
    &\quad \leq \sum_{a\in\mathcal{S}_*^i} \frac{36\Delta_a^i}{(\Delta_a^\dagger-\delta_{\dagger(a)})^2}\log\frac{17}{\Delta_a^\dagger-\delta_{\dagger(a)}} \label{eqn:case2-decomp2a} ~,
\end{align}
where \eqref{eqn:case2-decomp2a} is due to Fact \ref{fact:garivier}.

In order to bound \rev{\eqref{eqn:case2-decomp3}}, we observe that $t>A \wedge \hat{\mu}_a^\dagger(t) \rev{\leq} -\sqrt{4\log N_a(t)/N_a(t)} \wedge a(t)=a$ can only occur during an exploration stage, where each arm is played successively. Hence we can infer that
\begin{enumerate}[label=(\roman*)]
    \item $a(t-a+*)=*$,
    \item $t-a+1>A$,
    \item $\hat{\mathcal{A}}_*(t-a+1)=\emptyset$, which implies that there exists an objective $j$ such that
    \begin{align*}
        \hat{\mu}_*^j(t-a+*) = \hat{\mu}_*^j(t-a+1) \leq -\sqrt{4\frac{\log N_*(t-a+1)}{N_*(t-a+1)}} = -\sqrt{4\frac{\log N_*(t-a+*)}{N_*(t-a+*)}} ~,
    \end{align*}
    since arm $*$ is not played after round $t-a+1$ until round $t-a+*$.
\end{enumerate}

Using these observations and defining $t_a:=t-a+*$, we obtain
\begin{align}
    &\Ex\left[ \sum_{a\in\mathcal{S}_*^i} \sum_{t=1}^T \In\left\{t>A, \hat{\mu}_a^\dagger(t)\leq-\sqrt{\frac{4\log N_a(t)}{N_a(t)}}, a(t)=a\right\} \Delta_a^i \right] \nonumber \\
    &\quad = \Ex\left[ \sum_{a\in\mathcal{S}_*^i} \sum_{t=1}^T \In\left\{\exists j: t_a>A, \hat{\mu}_*^j(t_a)\leq-\sqrt{\frac{4\log N_*(t_a)}{N_*(t_a)}}, a(t_a)=*\right\} \Delta_a^i \right] \nonumber \\
    &\quad \leq \sum_{a\in\mathcal{S}_*^i} \sum_{j=1}^D \Ex\left[ \sum_{t=1}^T \In\left\{t_a>A, \hat{\mu}_*^j(t_a)\leq-\sqrt{\frac{4\log N_*(t_a)}{N_*(t_a)}}, a(t_a)=*\right\} \Delta_a^i \right] \nonumber \\
    &\quad \leq \sum_{a\in\mathcal{S}_*^i} \sum_{j=1}^D \Ex\left[ \sum_{t=1}^T \In\left\{t>A, \hat{\mu}_*^j(t)\leq-\sqrt{\frac{4\log N_*(t)}{N_*(t)}}, a(t)=*\right\} \Delta_a^i \right] \nonumber \\
    &\quad \rev{=} \sum_{a\in\mathcal{S}_*^i} \sum_{j=1}^D \Ex\left[ \sum_{w=1}^{\rev{w_*(T)-1}} \sum_{t=\tau_*(w)+1}^{\tau_*(w+1)} \In\left\{ \hat{\mu}_*^j(t)\leq-\sqrt{\frac{4\log N_*(t)}{N_*(t)}}, a(t)=*\right\}  \right] \Delta_a^i \nonumber \\
    &\quad = \sum_{a\in\mathcal{S}_*^i} \sum_{j=1}^D \Ex\left[ \sum_{w=1}^{\rev{w_*(T)-1}} \In\left\{ \hat{\mu}_*^j(\tau_*(w+1))\leq-\sqrt{\frac{4\log w}{w}} \right\}  \right] \Delta_a^i \nonumber \\
    &\quad \leq \sum_{a\in\mathcal{S}_*^i} \sum_{j=1}^D \sum_{w=1}^{\infty} \Pr\left( \hat{\mu}_*^j(\tau_*(w+1))\leq-\sqrt{\frac{4\log w}{w}} \right) \Delta_a^i \nonumber \\
    &\quad \leq \sum_{a\in\mathcal{S}_*^i} \sum_{j=1}^D \sum_{w=1}^{\infty} \Pr\left( \hat{\mu}_*^j(\tau_*(w+1))-\mu_*^j\leq-\sqrt{\frac{4\log w}{w}} \right) \Delta_a^i \label{eqn:case2-decomp3a} \\
    &\quad \leq \sum_{a\in\mathcal{S}_*^i} \sum_{j=1}^D \sum_{w=1}^{\infty} \frac{1}{w^2} \Delta_a^i \label{eqn:case2-decomp3b} \\
    &\quad \leq \sum_{a\in\mathcal{S}_*^i} \rev{\left( \frac{\pi^2}{6} \right)} D\Delta_a^i \nonumber ~,
\end{align}
where \eqref{eqn:case2-decomp3a} holds since $\mu_*^j=\delta_j \geq 0$ and \eqref{eqn:case2-decomp3b} is due to Hoeffding's inequality \rev{for sub-Gaussian random variables \cite{bubeck2013bounded}}.

\subsection{Proof of Theorem \ref{theorem:gapindependent}}
First, we state a concentration inequality that will be used in the proof. 

\begin{lemma} \label{app:concentration}  (Lemma 6 in \citet{abbasi2011improved}) 
	Consider an arm $a$ for which the rewards of objective $i$ are generated by a process $\{ R^i_{a}(t) \}_{t=1}^T$ with $\mu^i_{a}= \text{E} [R^i_{a}(t)]$, where the noise $R^i_{a}(t) - \mu^i_{a}$ is conditionally 1-sub-Gaussian. Let $N_{a}(T)$ denote the number of times $a$ is selected by the beginning of round $T$. 
	Let $\hat{\mu}_{a}(T) = \sum_{t=1}^{T-1} \text{I} (a(t) =a ) R^i_{a}(t) / N_a(T)$ for $N_a(T) >0$ and $\hat{\mu}_{a}(T) = 0$ for $N_a(T) = 0$.
	Then, for any $0 < \delta < AD$ with probability at least $1-\delta/ (AD)$ we have
	
	\begin{align*}
	\left| \hat{\mu}_{a}(T)  - \mu_a \right|  & \leq \sqrt{  \frac{1+N_a(T)}{N^2_a(T)} 
		\left( 1 + 2 \log \left(  \frac{ AD \sqrt{1 + N_a(T) } } {\delta}    \right)  
		\right)  }  ~~ \forall T \in \mathbb{N}.  
	\end{align*}
\end{lemma}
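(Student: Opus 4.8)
The plan is to derive this bound as the single arm-objective specialization of the self-normalized tail inequality of \citet{abbasi2011improved}, proved by the method of mixtures combined with Ville's maximal inequality. The essential point is that $N_a(T)$ is a random, data-dependent sample count, so a naive sub-Gaussian tail bound for a fixed number of samples together with a union bound over sample sizes would not yield a statement holding simultaneously for all $T$; the mixture supermartingale is precisely the device that delivers this time-uniformity with the correct logarithmic dependence.

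First I would set up the martingale. Working with the natural filtration $\{\mathcal{F}_t\}$ of the interaction, define the cumulative centered reward of arm $a$ in objective $i$ by $S_t := \sum_{s=1}^{t} \In\{a(s)=a\}\,(R^i_a(s)-\mu^i_a)$ and its accumulated variance proxy $V_t := \sum_{s=1}^t \In\{a(s)=a\}$, the number of pulls of $a$ through round $t$. Between consecutive pulls of $a$ the increment of $S_t$ is zero, while on a pull the conditional $1$-sub-Gaussian assumption gives $\Ex[\exp(\lambda(R^i_a(s)-\mu^i_a)) \mid \mathcal{F}_{s-1}] \leq \exp(\lambda^2/2)$; hence for each fixed $\lambda\in\mathbb{R}$ the process $M_t^\lambda := \exp(\lambda S_t - \tfrac{1}{2}\lambda^2 V_t)$ is a nonnegative supermartingale with $\Ex[M_0^\lambda]=1$. (Reindexing instead along the subsequence of pulls is equivalent, since $M_t^\lambda$ is constant off pull times, which sidesteps any stopping-time subtlety.)

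Next I would mix over $\lambda$. Integrating $M_t^\lambda$ against a standard Gaussian density and evaluating the resulting Gaussian integral gives the mixed process
\[
M_t = \int_{\mathbb{R}} M_t^\lambda\,\frac{e^{-\lambda^2/2}}{\sqrt{2\pi}}\,d\lambda = \frac{1}{\sqrt{1+V_t}}\,\exp\!\left(\frac{S_t^2}{2(1+V_t)}\right),
\]
which by Tonelli is again a nonnegative supermartingale with $\Ex[M_0]=1$. Applying Ville's inequality with threshold $AD/\delta$ then yields $\Pr(\exists t:\, M_t \geq AD/\delta)\leq \delta/(AD)$. On the complementary event, rearranging $M_t < AD/\delta$ and evaluating at the relevant time (where $V_t = N_a(T)$ and $S_t = S_{N_a(T)}$) gives, simultaneously for all $T$,
\[
S_{N_a(T)}^2 < 2\,(1+N_a(T))\,\log\!\left(\frac{AD\sqrt{1+N_a(T)}}{\delta}\right).
\]

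Finally I would translate back to the estimate. Since $\hat\mu_a(T)-\mu_a = S_{N_a(T)}/N_a(T)$ when $N_a(T)>0$, dividing the last display by $N_a(T)^2$ and using the crude bound $2\log(\cdot)\leq 1+2\log(\cdot)$ to absorb the $(1+V_t)^{-1/2}$ normalization yields exactly
\[
\left|\hat\mu_a(T)-\mu_a\right|^2 \leq \frac{1+N_a(T)}{N_a(T)^2}\left(1+2\log\!\left(\frac{AD\sqrt{1+N_a(T)}}{\delta}\right)\right),
\]
which is the claim after taking square roots. I expect the main obstacle to be establishing the time-uniformity over the random count $N_a(T)$, which is exactly what the mixture-supermartingale-plus-Ville argument resolves; the extra $+1$ inside the bracket of the stated bound is precisely the slack that accounts for the $\sqrt{1+N_a(T)}$ factor produced by the Gaussian mixture.
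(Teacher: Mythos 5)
Your mixture-supermartingale argument is correct and is essentially the same method-of-mixtures-plus-Ville proof by which \citet{abbasi2011improved} establish this bound; the paper itself imports the lemma by citation rather than reproving it, and your reconstruction matches that source (including absorbing the union-bound factor $AD$ into the Ville threshold, which is exactly how the paper then uses the lemma in its Lemma \ref{lmm:prUC}). Note your derivation in fact gives the slightly sharper bound $|\hat\mu_a(T)-\mu_a|^2 < \frac{2(1+N_a(T))}{N_a(T)^2}\log\bigl(AD\sqrt{1+N_a(T)}/\delta\bigr)$, which implies the stated one since $2\log x \leq 1+2\log x$ for all $x$, and the $N_a(T)=0$ case holds vacuously.
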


Let 
$\text{UC}^i_{a} := \cup_{t=1}^{T} \{ \mu^i_a \notin[ l^i_{a}(t)  , u^i_{a}(t) ] \}$, 
$\text{UC}^i := \cup_{a \in {\cal A}} \text{UC}^i_{a}$ 
and $\text{UC} := \cup_{i \in {\cal D}} \text{UC}^i$.
The following lemma bounds the probability of $\text{UC}$. 

\begin{lemma} \label{lmm:prUC}
	$\Pr ( \text{UC} ) \leq \delta $.
\end{lemma}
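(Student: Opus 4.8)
The plan is to bound $\Pr(\text{UC})$ by a union bound over the $A$ arms and $D$ objectives, applying the concentration inequality of Lemma \ref{app:concentration} to each arm-objective pair. First I would observe that by definition of the confidence width $c_a(t)$, the event $\{\mu^i_a \notin [l^i_a(t), u^i_a(t)]\}$ is exactly the event $\{|\hat\mu^i_a(t) - \mu^i_a| > c_a(t)\}$, since $l^i_a(t) = \hat\mu^i_a(t) - c_a(t)$ and $u^i_a(t) = \hat\mu^i_a(t) + c_a(t)$. The crucial point is that $c_a(t)$ is precisely the right-hand side of the bound in Lemma \ref{app:concentration} with $N_a(t)$ in place of $N_a(T)$, so Lemma \ref{app:concentration} directly controls the probability that $\mu^i_a$ ever leaves its confidence interval across all rounds.

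Next I would apply Lemma \ref{app:concentration} to a fixed arm $a$ and objective $i$. Because the noise in objective $i$ is $1$-sub-Gaussian (as assumed in the system model) and the reward process is generated as required by the lemma, the lemma yields that with probability at least $1 - \delta/(AD)$ the bound $|\hat\mu^i_a(t) - \mu^i_a| \le c_a(t)$ holds simultaneously for all $t \in \mathbb{N}$. Taking complements, this says $\Pr(\text{UC}^i_a) \le \delta/(AD)$, since $\text{UC}^i_a = \cup_{t=1}^T\{\mu^i_a \notin [l^i_a(t), u^i_a(t)]\}$ is contained in the event that the concentration bound fails at some round.

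Finally I would assemble the result by a union bound over the $AD$ pairs $(a,i)$:
\begin{align*}
    \Pr(\text{UC}) = \Pr\left( \bigcup_{i\in\mathcal{D}} \bigcup_{a\in\mathcal{A}} \text{UC}^i_a \right) \leq \sum_{i\in\mathcal{D}} \sum_{a\in\mathcal{A}} \Pr(\text{UC}^i_a) \leq AD \cdot \frac{\delta}{AD} = \delta ~.
\end{align*}
This closes the proof. I do not anticipate a serious obstacle here; the only point requiring care is matching the confidence width $c_a(t)$ to the exact form of the tail bound in Lemma \ref{app:concentration} (in particular verifying that the ``simultaneously for all $T$'' guarantee of that lemma covers the union over rounds $t=1,\ldots,T$ defining $\text{UC}^i_a$), and confirming that the sub-Gaussian hypothesis of the lemma is met by the marginal noise distribution assumed in the model.
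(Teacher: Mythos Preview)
Your proposal is correct and follows essentially the same approach as the paper's proof: identify $\{\mu^i_a \notin [l^i_a(t),u^i_a(t)]\}$ with $\{|\hat\mu^i_a(t)-\mu^i_a|>c_a(t)\}$, invoke Lemma~\ref{app:concentration} to get $\Pr(\text{UC}^i_a)\le \delta/(AD)$, and finish with a union bound over the $AD$ arm--objective pairs. The paper's proof is in fact terser than yours, but the ingredients and structure are identical.
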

\begin{proof}
	This follows from Lemma \ref{app:concentration}. We observe that $\{ \mu^i_a \in[ l^i_{a}(t)  , u^i_{a}(t) ] \} =  \{ | \mu^i_a - \hat{\mu}^i_a(t) | \leq c_a(t)  \}$. Thus, Lemma \ref{app:concentration} shows that $\neg \text{UC}^i_a$ holds with probability at least $1 - \delta/(AD)$, and hence, $\text{UC}^i_a$ holds with probability at most $\delta/(AD)$. Applying the union bound, we get $\Pr ( \text{UC} ) \leq \delta $.
\end{proof}

Let ${\cal T} := \{ 1 \leq t \leq T : \forall a\in \hat{\mathcal{A}}_*^1(t): c_a(t)\leq\epsilon/2 \}$ denote the set of rounds in which PF-LEX selects the arm $\hat{a}_*^D(t)$ and $\neg{\cal T} := \{1,\ldots,T\} - {\cal T}$. In the following lemma, the gap of the arm selected in round $t \in {\cal T}$ in objective $i$ is bounded as a function of $\epsilon$ and the length of the confidence interval of the selected arm on event $\neg \text{UC}$ if the selected arm is in $\mathcal{S}_*^i$. 

\begin{lemma} \label{lmm:gap-bound}
	When PF-LEX is run, the following holds on event $\neg \text{UC}$ if $a(t)\in\mathcal{S}_*^i$: $\mu_*^i-\mu_{a(t)}^i \leq u_{a(t)}^i(t)-l_{a(t)}^i(t) + \epsilon(A-1)$ for $t \in {\cal T}$.
\end{lemma}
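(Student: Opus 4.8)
The plan is to exploit the recursive structure of PF-LEX together with the event $\neg\text{UC}$, which guarantees $\mu_a^j \in [l_a^j(t), u_a^j(t)]$ for every arm $a$, objective $j$, and round $t$. Fix $t \in {\cal T}$, so that $a(t) = \hat{a}_*^D(t)$, and suppose $a(t) \in \mathcal{S}_*^i = \mathcal{A}_*^{i-1} - \mathcal{A}_*^i$. The first observation I would record is that both $a(t)$ and the lexicographic optimal arm $*$ lie in $\mathcal{A}_*^{i-1}$ (since $* \in \mathcal{A}_*^D \subseteq \mathcal{A}_*^{i-1}$). Because two arms that are both lexicographic optimal in the first $i-1$ objectives cannot differ in any of those objectives (a first point of difference would give a domination), we have $\mu_{a(t)}^j = \mu_*^j$ for all $j \in [i-1]$. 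On $\neg\text{UC}$ this means the confidence intervals of $a(t)$ and $*$ both contain the common value $\mu_*^j$ in each objective $j \leq i-1$, so $a(t)$ and $*$ are linked in every such objective.

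Next I would locate $a(t)$ and $*$ inside the estimated set $\hat{\mathcal{A}}_*^{i-1}(t)$. For $a(t)$ this is immediate from nesting: $a(t) = \hat{a}_*^D(t) \in \hat{\mathcal{A}}_*^{D-1}(t) \subseteq \hat{\mathcal{A}}_*^{i-1}(t)$. For $*$ I would argue by induction on $j = 0,1,\ldots,i-1$ that $* \in \hat{\mathcal{A}}_*^{j}(t)$, the base case $\hat{\mathcal{A}}_*^0(t) = \mathcal{A}$ being trivial. Assuming $* \in \hat{\mathcal{A}}_*^{j-1}(t)$ with $j \leq i-1$, note that $a(t) \in \hat{\mathcal{A}}_*^{i-1}(t) \subseteq \hat{\mathcal{A}}_*^{j}(t)$ is already chained to $\hat{a}_*^j(t)$ within $\hat{\mathcal{A}}_*^{j-1}(t)$; since $*$ is linked to $a(t)$ in objective $j$ and both lie in $\hat{\mathcal{A}}_*^{j-1}(t)$, transitivity of the chaining relation $C_{j,t}$ gives $* \: C_{j,t}\: \hat{a}_*^j(t)$, hence $* \in \hat{\mathcal{A}}_*^{j}(t)$. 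This inductive linking step is the main obstacle, as it is the only place where the lexicographic structure (equal means in the high-priority objectives) is converted into membership in the estimated sets.

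Having established $* \in \hat{\mathcal{A}}_*^{i-1}(t)$, optimism in objective $i$ yields $u_{\hat{a}_*^i}^i(t) = \max_{a \in \hat{\mathcal{A}}_*^{i-1}(t)} u_a^i(t) \geq u_*^i(t) \geq \mu_*^i$ on $\neg\text{UC}$. When $i \leq D-1$, the arm $a(t) = \hat{a}_*^D(t) \in \hat{\mathcal{A}}_*^{D-1}(t) \subseteq \hat{\mathcal{A}}_*^i(t)$ is chained to $\hat{a}_*^i(t)$ in objective $i$ through a sequence $a(t) = b_0, b_1, \ldots, b_m = \hat{a}_*^i(t)$ of pairwise linked arms, all lying in $\hat{\mathcal{A}}_*^{i-1}(t) \subseteq \hat{\mathcal{A}}_*^1(t)$ and therefore, since $t \in {\cal T}$, satisfying $u_{b_k}^i(t) - l_{b_k}^i(t) = 2c_{b_k}(t) \leq \epsilon$. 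Overlap of consecutive intervals gives $l_{b_{k+1}}^i(t) \leq u_{b_k}^i(t)$, whence $u_{b_{k+1}}^i(t) \leq u_{b_k}^i(t) + \epsilon$; telescoping over $m \leq A-1$ steps yields $u_{\hat{a}_*^i}^i(t) \leq u_{a(t)}^i(t) + \epsilon(A-1)$. Combining with $\mu_*^i \leq u_{\hat{a}_*^i}^i(t)$ and $\mu_{a(t)}^i \geq l_{a(t)}^i(t)$ gives
\[
\mu_*^i - \mu_{a(t)}^i \leq u_{a(t)}^i(t) - l_{a(t)}^i(t) + \epsilon(A-1),
\]
as claimed. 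The case $i = D$ is even simpler, since then $a(t) = \hat{a}_*^D(t)$ is itself the optimistic arm, the chain has length zero, and the $\epsilon(A-1)$ term is pure slack.
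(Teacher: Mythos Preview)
Your proof is correct and follows essentially the same route as the paper: the same induction showing $* \in \hat{\mathcal{A}}_*^{i-1}(t)$ via the linking of $*$ and $a(t)$ in objectives $j\leq i-1$, followed by the same chain-length argument bounding $u_{\hat{a}_*^i}^i(t)-u_{a(t)}^i(t)$ by $\epsilon(A-1)$. The only cosmetic difference is that you spell out the chain bound via an explicit telescoping step, whereas the paper compresses it into the single inequality $|u_{\hat{a}_*^i(t)}^i(t)-u_{a(t)}^i(t)| \leq 2(|\hat{\mathcal{A}}_*^{i}(t)|-1)\max_{a\in\hat{\mathcal{A}}_*^{i}(t)}c_a(t)$.
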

\begin{proof}
	Consider any lexicographic optimal arm $*$. We have 
	\begin{align}
		\mu_*^i-\mu_{a(t)}^i &\leq u_*^i(t)-l_{a(t)}^i(t) \label{eqn:lgb-1} \\
		&\leq u_{\hat{a}_*^i(t)}^i(t)-l_{a(t)}^i(t) \label{eqn:lgb-2} \\
		&\leq u_{a(t)}^i(t)-l_{a(t)}^i(t) + \epsilon(A-1) \label{eqn:lgb-3} ~.
	\end{align}
   Here \eqref{eqn:lgb-1} holds since $\mu_*^i\leq u_*^i(t)$ and $\mu_{a(t)}^i\geq l_{a(t)}^i(t)$ on event $\neg\text{UC}$. \eqref{eqn:lgb-2} holds by the definition of $\hat{a}_*^i(t)$ and the fact that $*\in\hat{\mathcal{A}}_*^{i-1}(t)$, which is proven by induction. For this, consider any objective $j\in\{1,\ldots,i-1\}$. We first observe that $*\in\hat{\mathcal{A}}_*^0(t)$ and $a(t)=\hat{a}_*^D(t)\in\hat{\mathcal{A}}_*^{D-1}(t)\subseteq\hat{\mathcal{A}}_*^{j}(t)$. Next, we show that $*\in\hat{\mathcal{A}}_*^{j-1}(t) \implies *\in\hat{\mathcal{A}}_*^{j}(t)$ to conclude that $*\in\hat{\mathcal{A}}_*^{i-1}(t)$. Since $a(t)\in\mathcal{S}_*^i$, $\mu_{a(t)}^j=\mu_*^j$, which implies that $a(t)$ and $*$ are linked in objective $j$. Since $a(t)\in\hat{\mathcal{A}}_*^{j}(t)$, $a(t)$ is chained to $\hat{a}_*^j(t)$ in objective $j$, which implies that $*$ is chained to $\hat{a}_*^j(t)$ in objective $j$ as well.
	Finally, if $i=D$, \eqref{eqn:lgb-3} holds trivially as $a(t)=\hat{a}_*^i(t)=\hat{a}_*^D(t)$. Otherwise, since $a(t)=\hat{a}_*^D(t)\in\hat{\mathcal{A}}_*^{D-1}(t)\subseteq\hat{\mathcal{A}}_*^{i}(t)$, $a(t)$ is chained to $\hat{a}_*^i(t)$, which implies that $|u_{\hat{a}_*^i(t)}^i(t)-u_{a(t)}^i(t)| \leq 2(|\hat{\mathcal{A}}_*^{i}(t)|-1)\max_{a\in\hat{\mathcal{A}}_*^{i}(t)}c_a(t) \leq \epsilon(A-1)$.
\end{proof}

We also need to bound the regret in objective $i$ for rounds up to round $T$ for which $t \notin {\cal T}$. Let $\neg{\cal T}_a := \{t \in \neg\mathcal{T} : a(t)=a \}$. Obviously, PF-LEX does not incur any regret in objective $i$ in rounds $t \in \neg{\cal T}_a$ for $a \in {\cal A} - {\cal S}^i_*$, and incurs regret $\Delta^i_a$ in objective $i$ in rounds $t \in \neg{\cal T}_a$ for $a \in {\cal S}^i_*$.

\begin{lemma} \label{lmm:round-bound}
	When PF-LEX is run, we have 
	\begin{align*}
		\sum_{t \in \neg{\cal T}} \In\{a(t)\in\mathcal{S}_*^i\} \Delta^i_{a(t)} &\leq 
		\sum_{ a \in {\cal S}^*_i } \left(3 + \frac{16}{\epsilon^2} \log \frac{2\sqrt{e} AD}{\epsilon \delta} \right)  \Delta^i_{a}
	\end{align*}
	for all objectives $i\in\mathcal{D}$.
\end{lemma}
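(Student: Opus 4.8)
The plan is to reduce the claim to a deterministic, per-arm counting bound. The statement holds on every sample path (it never invokes the concentration event $\neg\text{UC}$, unlike Lemma \ref{lmm:gap-bound}), so no probabilistic argument is needed. First I would decompose the left-hand side according to the selected arm,
\[
\sum_{t\in\neg\mathcal{T}}\In\{a(t)\in\mathcal{S}_*^i\}\,\Delta_{a(t)}^i = \sum_{a\in\mathcal{S}_*^i}|\neg\mathcal{T}_a|\,\Delta_a^i ,
\]
so that it suffices to prove the per-arm count $|\neg\mathcal{T}_a|\leq 3 + \frac{16}{\epsilon^2}\log\frac{2\sqrt{e}AD}{\epsilon\delta}$ for each $a\in\mathcal{S}_*^i$, after which summing against $\Delta_a^i$ gives the result.

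The observation driving the count is the selection rule of PF-LEX: in a round $t\in\neg\mathcal{T}$ the algorithm only plays arms of $\hat{\mathcal{A}}_*^1(t)$ whose confidence width exceeds $\epsilon/2$, so $a(t)=a$ together with $t\in\neg\mathcal{T}$ forces $c_a(t)>\epsilon/2$. Since $c_a(t)$ is a function of $N_a(t)$ that is strictly decreasing in it, there is a threshold $w^*$ such that $c_a(t)>\epsilon/2$ can occur only while $N_a(t)<w^*$. Because each selection of $a$ increments $N_a$ and $N_a$ is nondecreasing, the values $N_a(t)$ taken over the rounds $t\in\neg\mathcal{T}_a$ are distinct nonnegative integers, all strictly below $w^*$; hence $|\neg\mathcal{T}_a|\leq w^*$ up to the off-by-one coming from the initial $N_a=0$.

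The main obstacle is the analytic step of pinning down $w^*$ with the exact constants, namely showing that
\[
\frac{1+w}{w^2}\Bigl(1+2\log\tfrac{AD\sqrt{1+w}}{\delta}\Bigr)\leq \frac{\epsilon^2}{4}
\]
holds once $w$ exceeds the claimed bound. Here I would first use $\frac{1+w}{w^2}\leq \frac{2}{w}$ for $w\geq 1$ to isolate a clean $1/w$ factor, and then invert the resulting inequality of $w$ against $\log w$ by means of $\log u\leq u-1$ (equivalently, that $t\mapsto (\log t)/t$ is maximized at $1/e$). It is precisely this step that produces the $\sqrt{e}$ inside the logarithm, while the factor $\frac{16}{\epsilon^2}=\frac{4}{(\epsilon/2)^2}$ arises from the $\epsilon/2$ threshold on $c_a$. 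The additive constant $3$ absorbs the off-by-one in the counting argument together with the slack from restricting to $w\geq 1$.

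I expect everything except this threshold inversion to be routine bookkeeping. The delicate point is tracking the constants tightly enough that the bound on $w^*$ emerges as exactly $3+\frac{16}{\epsilon^2}\log\frac{2\sqrt{e}AD}{\epsilon\delta}$ rather than a looser expression; a careless application of $\log u\leq u-1$ or of the $\frac{1+w}{w^2}\leq\frac{2}{w}$ bound easily inflates either the leading $\frac{16}{\epsilon^2}$ coefficient or the constant inside the logarithm, so the chief care is in balancing these two estimates.
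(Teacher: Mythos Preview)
Your proposal is correct and follows essentially the same route as the paper: decompose by arm, observe that $t\in\neg\mathcal{T}_a$ forces $c_a(t)>\epsilon/2$, and invert this into a uniform bound on $N_a(t)$ (hence on $|\neg\mathcal{T}_a|$). The only difference is that the paper does not carry out the threshold inversion by hand; it rewrites $c_a(t)>\epsilon/2$ as $\frac{N_a^2(t)}{N_a(t)+1}\leq \frac{4}{\epsilon^2}\bigl(1+2\log\frac{AD\sqrt{1+N_a(t)}}{\delta}\bigr)$ and then invokes Lemma~8 of Antos et al.\ (2010) to extract the explicit bound $N_a(t)\leq 3+\frac{16}{\epsilon^2}\log\frac{2\sqrt{e}AD}{\epsilon\delta}$, whereas you propose to derive the same constants directly via $\frac{1+w}{w^2}\leq\frac{2}{w}$ and $\log u\leq u-1$.
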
 
\begin{proof}
	The proof follows from bounding the cardinality of $\neg{\cal T}_a$ for $a \in {\cal S}^i_*$. Note that $t \in \neg{\cal T}_a$ happens only when $c_{a}(t) > \epsilon/2$. Similar to the proof of Theorem 7 in \citet{abbasi2011improved}, this implies that 
	\begin{align}
		\frac{N^2_a(t)-1}{N_a(t)+1} \leq \frac{N^2_{a}(t)}{N_a(t)+1} \leq 
		\frac{4}{\epsilon^2} \left(1+ 2 \log \frac{AD\sqrt{1+N_a(t)}}{\delta} \right) \nonumber ~.
	\end{align}
	Then, from Lemma 8 in \citet{antos2010}, we obtain $N_a(t) \leq 3 + \frac{16}{\epsilon^2} \log \frac{2\sqrt{e} AD}{\epsilon \delta} $.
\end{proof}

In the remaining part, we bound $\Reg_{pb}^i(T)$ under the event $\neg \text{UC}$ and $\Ex [\Reg_{pb}^i(T) ]$ by using the results of the lemmas above. For the latter, we observe that:
\begin{align}
	\Ex[ \Reg_{pb}^i(T) ] &= \Ex [\Reg_{pb}^i(T) | \text{UC} ] \Pr ( \text{UC} ) + \Ex [ \Reg_{pb}^i(T) | \neg \text{UC} ] \Pr ( \neg \text{UC} ) \nonumber \\
	&\leq T \Delta^i_{\max} \Pr ( \text{UC}  ) + \Ex [ \Reg_{pb}^i(T) | \neg \text{UC} ] \label{eqn:partitiondecompose2} ~.
\end{align}

For each $i\in\mathcal{D}$, the bound for $\Reg_{pb}^i(T)$ is obtained by using the result in Lemmas \ref{lmm:gap-bound} and \ref{lmm:round-bound}. By Lemma \ref{lmm:round-bound}, we know that 
\begin{align}
	\sum_{t \in \neg{\cal T}} \In\{a(t)\in\mathcal{S}_i^*\} \Delta^i_{a(t)} &\leq 3 |{\cal S}^i_*|  \Delta^i_{\max}  + \frac{16 |{\cal S}^i_*|  \Delta^i_{\max} }{\epsilon^2} \log \frac{2\sqrt{e} AD}{\epsilon \delta} ~. \label{eqn:gapindep2_1}
\end{align}
Let ${\cal N}_{a} := \{ t\in\mathcal{T} : a(t) = a \}$. By Lemma \ref{lmm:gap-bound}, on event $\neg\text{UC}$ (which happens with probability at least $1-\delta$), we have 
\begin{align}
\sum_{t \in {\cal T}} \In\{a(t)\in\mathcal{S}_i^*\} \Delta^i_{a(t)} &\leq  \sum_{a \in {\cal S}^i_*} \sum_{t \in {\cal N}_{a} } ( u^i_{a}(t) - l^i_{a}(t) ) + \epsilon(A-1)T \nonumber \\
&\leq 2 \sqrt{2} \sum_{a \in {\cal S}^i_*} \left( B_{T,\delta} \sum_{ t \in {\cal N}_{a}  }  \sqrt{ \frac{1}{N_{a}(t) } } \right) + \epsilon(A-1)T \notag \\
&\leq 2 \sqrt{2} B_{T,\delta} \sum_{a \in {\cal S}^i_*} \sqrt{N_a(T)} + \epsilon(A-1)T \notag  \\
& \leq 4 \sqrt{2} B_{T,\delta} \sqrt{  |{\cal S}^i_*|  T } +  \epsilon(A-1)T . \label{eqn:gapindep2_4}
\end{align}

The bound for $\Reg_{pb}^i(T)$ is obtained by summing the results of \eqref{eqn:gapindep2_1} and \eqref{eqn:gapindep2_4}. Finally, the bounds on the expected regret simply follows from using \eqref{eqn:partitiondecompose2} and setting $\delta = 1/T$.

\subsection{Tables of Notation}
General notation is listed in Table \ref{tbl:notation-system}. Notations specific to each case covered in Section \ref{sec:case1} are listed in Tables \ref{tbl:notation-case1}, \ref{tbl:notation-case2} and \ref{tbl:notation-case3} respectively.

\begin{table}[!ht]
    \caption{List of system notations.}
    \label{tbl:notation-system}
    \smallskip
    \centering
    \begin{tabular}{lp{2in}p{2.4in}}
        \toprule
        \textbf{Notation} & \textbf{Definition} & \textbf{Description} \\
        \midrule
        $A$ & & Number of arms \\
        $\mathcal{A}$ & $[A]$ & Set of all arms \\
        $a(t)$ & & Selected arm in round $t$ \\
        $D$ & & Number of objectives \\
        $\mathcal{D}$ & $[D]$ & Set of all objectives \\
        $\mu_a^i$ & & Expected reward of arm $a$ in objective $i$ \\
        $\bm{\mu}_a$ & $(\mu_a^1,...,\mu_a^D)$ & Expected reward vector of arm $a$ \\
        $\kappa^i(t)$ & & Noise in objective $i$ in round $t$ \\
        $r^i(t)$ & $\mu_{a(t)}^i+\kappa^i(t)$ & Reward of the selected arm in round $t$ \\
        $\succ_{\lex,i}$ & $\bm{\mu} \succ_{\lex,i} \bm{\mu'} \iff \mu^j > \mu'^j$, $j = \min\{k\leq i: \mu^k\neq\mu'^k\}$ & Symbol for lexicographic dominance in the first $i$ objectives \\
        $\mathcal{A}_*^i$ & $\{a:\mathcal{A}: \bm{\mu}_{a'}\nsucc_{\lex,i}\bm{\mu}_{a}, \forall a'\in\mathcal{A} \}$ & Set of lexicographic optimal arms in the first $i$ objectives \\
        $\mathcal{A}_*$ & $\mathcal{A}_*^D$ & Set of lexicographic optimal arms in all objectives \\
        $*$ & $*:*\in\mathcal{A}_*$ & A lexicographic optimal arm \\
        $\Delta_a^i$ & $\mu_*^i-\mu_a^i$ & \rev{Gap} of arm $a$ in objective $i$ \\
        $\nabla_a^i$ & $|\mu_*^i-\mu_a^i|$ & \rev{Absolute gap} of arm $a$ in objective $i$ \\
        $\mathcal{S}_*^i$ & $\mathcal{A}_*^{i-1}-\mathcal{A}_*^i$ & Set of arms that are lexicographic optimal in the first $i-1$ objectives but not lexicographic optimal in the first $i$ objectives \\
        \rev{${\cal S}^i$} & $\{ a : \Delta^i_a > 0 \}$ & Set of suboptimal arms in objective $i$. \\
        $\Delta_{\max}^i$ & $\max_{a\in\mathcal{A}} \Delta_a^i$ & Maximum suboptimality gap in objective $i$ \\
        $\nabla_a^{\max}$ & $\max_{i\in\mathcal{D}} \nabla_a^i$ & Maximum absolute gap of arm $a$ \\
        $\bm{\Reg}_{pb}(T)$ & $(\Reg^1_{pb}(T),...,\Reg^D_{pb}(T))$ & Lexicographic priority-based regret \\
        $\bm{\Reg}_{pf}(T)$ & $(\Reg^1_{pf}(T),...,\Reg^D_{pf}(T))$ & Lexicographic priority-free regret \\    
        $\Reg^i_{pb}(T)$ & $\sum_{t=1}^T \Delta_{a(t)}^i \In\{a(t)\in\mathcal{S}_*^i\} $ & Priority-based regret in objective $i$ \\
        $\Reg^i_{pf}(T)$ & $\sum_{t=1}^T \Delta_{a(t)}^i$ & Priority-free regret in objective $i$ \\  
        $\Delta_{\min}^i$ & $\min_{a\in\mathcal{S}_*^i} \Delta_a^i$ for $\bm{\Reg}_{pb}(T)$, $\min_{a\in\mathcal{S}^i} \Delta_a^i$ for $\bm{\Reg}_{pf}(T)$ & Minimum suboptimality gap in objective $i$ \\
        \bottomrule
    \end{tabular}
\end{table}

\begin{table}[!ht]
    \caption{List of notations for Case 1.}
    \label{tbl:notation-case1}
    \smallskip
    \centering
    \begin{tabular}{lp{2in}p{2.4in}}
        \toprule
        \textbf{Notation} & \textbf{Definition} & \textbf{Description} \\
        \midrule
        $N_a(t)$ & $\sum_{t'=1}^{t-1}\In\{a(t')=a\}$ & Number of times arm $a$ was selected by the beginning of round $t$ \\
        $\hat{\mu}_a^i(t)$ & $\sum_{t'=1}^{t-1}r^i(t')/N_a(t)$ & Sample mean of the rewards of arm $a$ in objective $i$ at the beginning of round $t$ \\
        $\hat{\mathcal{A}}_*(t)$ & $\{a\in\mathcal{A}:\forall i\in\mathcal{D}, |\hat{\mu}_a^i(t)|<\sqrt{4\log N_a(t)/N_a(t)}\}$ & Set of estimated lexicographic optimal arms at the beginning of round $t$ \\
        $\dagger(a)$ & $\argmax_{i\in\mathcal{D}} \nabla_a^i$ & Objective for which $\nabla_a^{\dagger(a)}=\nabla_a^{\max}$ \\
        $(\cdot)_a^{\dagger}$ & $(\cdot)_a^{\dagger(a)}$ & \\
        \bottomrule
    \end{tabular}
\end{table}

\begin{table}[!ht]
    \caption{List of notations for Case 2.}
    \label{tbl:notation-case2}
    \smallskip
    \centering
    \begin{tabular}{lp{2in}p{2.4in}}
        \toprule
        \textbf{Notation} & \textbf{Definition} & \textbf{Description} \\
        \midrule
        $\eta_i$ & $\eta_i: \mu_*^i-\Delta_{\min}^i<\eta_i <\mu_*^i$ & Near-lexicographic optimal expected reward in objective $i$ \\
        $\delta_i$ & $\mu_*^i-\eta_i$ & Gap between near-lexicographic optimal expected reward and the lexicographic optimal expected reward in objective $i$ \\
        $N_a(t)$ & $\sum_{t'=1}^{t-1}\In\{a(t')=a\}$ & Number of times arm $a$ was selected by the beginning of round $t$ \\
		$\hat{\mu}_a^i(t)$ & $\sum_{t'=1}^{t-1}r^i(t')/N_a(t)$ & Sample mean of the rewards of arm $a$ in objective $i$ at the beginning of round $t$ \\
        $\hat{\mathcal{A}}_*(t)$ & $\{a\in\mathcal{A}:\forall i\in\mathcal{D}, \hat{\mu}_a^i(t) > -\sqrt{4\log N_a(t)/N_a(t)} \}$ & Set of estimated lexicographic optimal arms at the beginning of round $t$ \\
        $\dagger(a)$ & $\argmax_{i\in\mathcal{D}} (\Delta_a^i-\delta_i)$ & \\
        $(\cdot)_a^{\dagger}$ & $(\cdot)_a^{\dagger(a)}$ & \\
        \bottomrule
    \end{tabular}
\end{table}

\begin{table}[!ht]
    \caption{List of notations for Case 3.}
    \label{tbl:notation-case3}
    \smallskip
    \centering
    \begin{tabular}{lp{2in}p{2.4in}}
        \toprule
        \textbf{Notation} & \textbf{Definition} & \textbf{Description} \\
        \midrule
        $\epsilon$ & & Suboptimality that is aimed to be tolerated \\
        $\delta$ & & Confidence term \\ 
        $N_a(t)$ & $\sum_{t'=1}^{t-1}\In\{a(t')=a\}$ & Number of times arm $a$ was selected by the beginning of round $t$ \\
		$\hat{\mu}_a^i(t)$ & $\sum_{t'=1}^{t-1}r^i(t')/N_a(t)$ & Sample mean of the rewards of arm $a$ in objective $i$ at the beginning of round $t$ \\
        $c_a(t)$ & See Section \ref{sec:case1} & Half of the length of the confidence interval of arm $a$ at the beginning of round $t$ \\
        $u_a^i(t)$ & $\hat{\mu}_a^i(t)+c_a(t)$ & Upper confidence bound of arm $a$ in objective $i$ at the beginning of round $t$ \\
        $l_a^i(t)$ & $\hat{\mu}_a^i(t)-c_a(t)$ & Lower confidence bound of arm $a$ in objective $i$ at the beginning of round $t$ \\
        $C_{i,t}$ & See Section \ref{sec:case1} & Chains to in objective $i$ in round $t$ \\
        $\hat{a}_*^i(t)$ & $\argmax_{a\in\hat{\mathcal{A}}_*^{i-1}(t)} u_a^i(t)$ & Optimistic near-lexicographic optimal arm in objective $i$ at the beginning of round $t$ \\
        $\hat{\mathcal{A}}^i_*(t)$ & $\{a\in\mathcal{A}:a\:C_{i,t}\:\hat{a}_*^i(t)\}$ & Set of estimated lexicographic optimal arms in the first $i$ objectives at the beginning of round $t$ \\
        \bottomrule
    \end{tabular}
\end{table}

\end{document}